\newtheorem{definition}{Definition}[section]
\newtheorem{lemma}{Lemma}[section]
\newtheorem{theorem}{Theorem}[section]
\newcommand\name{\textsc{Scissorhands}}
\newcommand{\hypothesis}{\textit{persistence of importance hypothesis}}
\newcommand{\paren}[1]{\left(#1\right)}
\newcommand{\norm}[1]{\left\|#1\right\|}
\newcommand{\softmax}{\textrm{\texttt{softmax}}}
\newcommand{\E}[1]{\mathbb{E}\left[#1\right]}
\definecolor{ashgrey}{rgb}{0.7, 0.75, 0.71}
\newcommand{\R}{\mathbb{R}}
\title{Scissorhands: Exploiting the Persistence of Importance Hypothesis for LLM KV Cache Compression at Test Time}
\author{%
  Zichang Liu \\
  Department of Computer Science\\
  Rice University\\
  Houston, TX 77005 \\
  \texttt{zichangliu@rice.edu} \\
  \And
  Aditya Desai \\
  Department of Computer Science\\
  Rice University\\
  Houston, TX 77005 \\
  \texttt{Aditya.P.Desai@rice.edu} \\
  \And
  Fangshuo Liao \\
  Department of Computer Science\\
  Rice University\\
  Houston, TX 77005 \\
  \texttt{Fangshuo.Liao@rice.edu} \\
  \And
  Weitao Wang \\
  Department of Computer Science\\
  Rice University\\
  Houston, TX 77005 \\
  \texttt{wtwang@rice.edu} \\
  \And
  Victor Xie \\
  Department of Computer Science \\
  Rice University\\
  Houston, TX 77005 \\
  \texttt{vyx2@rice.edu} \\
  \And
  Zhaozhuo Xu \\
  Department of Computer Science \\
  Rice University\\
  Houston, TX 77005 \\
  \texttt{zhaozhuoxu@gmail.com} \\
  \And
  Anastasios Kyrillidis \\
  Department of Computer Science \\
  Rice University\\
  Houston, TX 77005 \\
  \texttt{anastasios@rice.edu} \\
  \And
  Anshumali Shrivastava \\
  Department of Computer Science \\
  Rice University\\
  Houston, TX 77025 \\
  \texttt{anshumali@rice.edu} \\
}
\begin{document}

\maketitle

\begin{abstract}
Large language models(LLMs) have sparked a new wave of exciting AI applications. Hosting these models at scale requires significant memory resources. One crucial memory bottleneck for the deployment stems from the context window. It is commonly recognized that model weights are memory hungry; however, the size of key-value embedding stored during the generation process (KV cache) can easily surpass the model size. The enormous size of the KV cache puts constraints on the inference batch size, which is crucial for high throughput inference workload. Inspired by an interesting observation of the attention scores, we hypothesize \emph{the persistence of importance}: only pivotal tokens, which had a substantial influence at one step, will significantly influence future generations. Based on our empirical verification and theoretical analysis around this hypothesis, we propose \name{}, a system that maintains the memory usage of KV cache under a fixed budget without finetuning the model. We validate that 
\name{} reduces the inference memory usage of the KV cache by up to 5$\times$ without compromising model quality. We further demonstrate that \name{} can be combined with 4-bit quantization for further compression

\end{abstract}
\section{Introduction}

Large language models(LLMs), trained on immense amounts of text data, have demonstrated an incredible ability to generate text that is both logically connected and contextually relevant~\cite{bommasani2021opportunities,liang2022holistic,brown2020language,min2022rethinking,chan2022data}. LLM inference follows an autoregressive fashion, generating one token at each step conditioned on the previous steps. At each step, the key-value embedding in attention is stored in memory to avoid repetitive key-value projection computation at future steps. Unfortunately, the memory of the key-value cache( KV cache), including prompts and previously generated tokens, can be surprisingly large. Using OPT-175B as an example, the impressive 175 billion parameters consume around 325 GB of memory. At the same time, at batch size 128 and sequence length 2048, the KV cache requires around 950 GB of memory, three times larger than the model weights. Considering that 8 Nvidia A100-80GB offers 640GB GPU memory, the memory usage of the KV cache is truly concerning. 

LLMs are typically deployed on fixed memory hardware, and the size of model weights is also fixed once deployed. Apart from a small memory buffer typically reserved for communication and computation, the rest of the available memory is for the KV cache. The size of the KV cache depends on batch size, sequence length, and model dimension. Thus, at a given inference sequence length, compression in the KV cache memory translates almost linearly into an increase in the batch size. And any increase in batch size is significant for high-throughput inference systems~\cite{pope2022efficiently, sheng2023highthroughput}. 

\begin{wrapfigure}{R}{9 cm}
    \centering
    \vspace{-6mm}
    \subfigure[Attention map at position 178]{
    \includegraphics[width=0.65\textwidth]{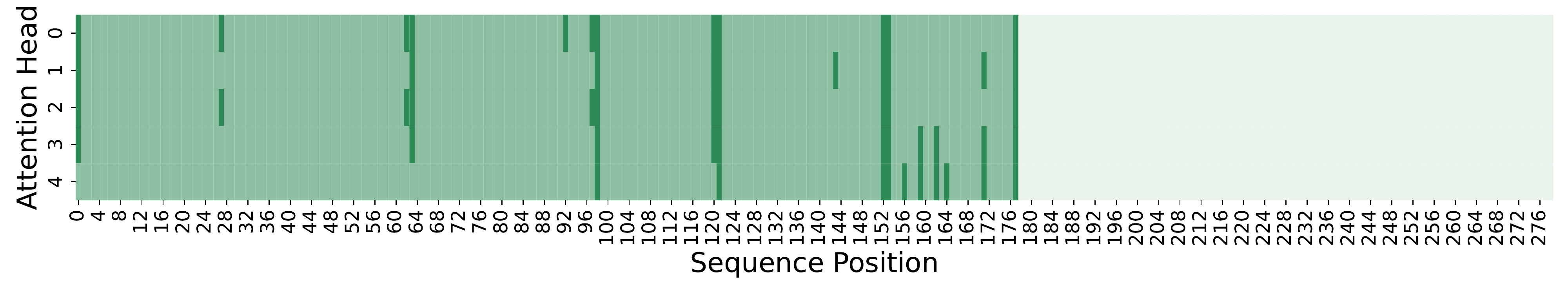}
    \label{figure:attn_score_t1}
    }
    \vspace{-4mm}
    \subfigure[Attention map at position 228]{
    \includegraphics[width=0.65\textwidth]{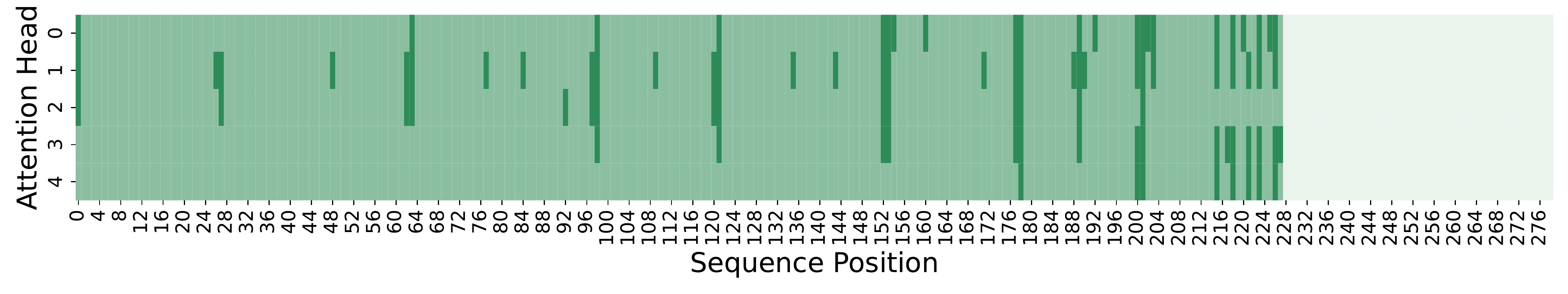}
    \label{figure:attn_score_t2}
    }
    \vspace{-2mm}
    \subfigure[Attention map at position 278]{
    \includegraphics[width=0.65\textwidth]{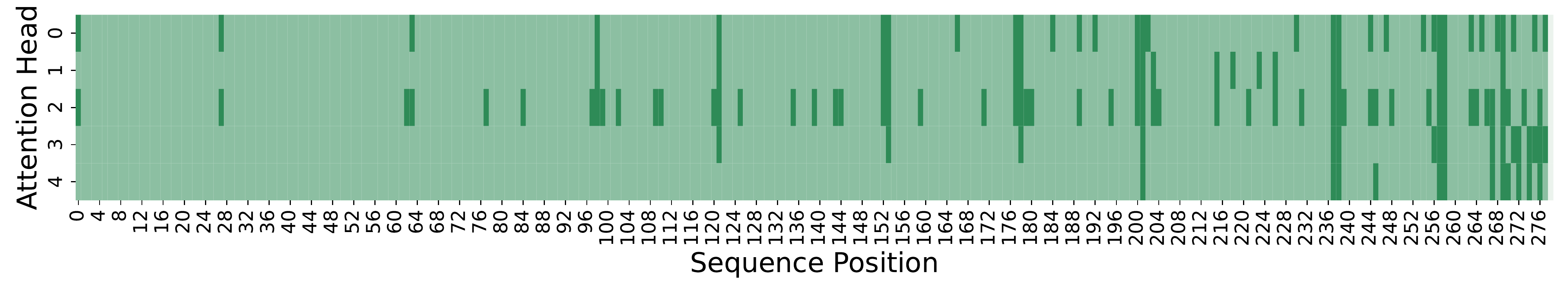}
    \label{figure:attn_score_t3}
    }
    \vspace{-2mm}
    \caption{ \textbf{Repetitive Attention Pattern}. We plot the attention map at three token positions in a sentence. Only five attention heads are plotted for a clearer presentation. We discretize the attention score such that the high score is dark green, and the low score is light green. In Figure~\ref{figure:attn_score_t1}, the token at position 178 pays heavy attention to positions 27, 63, 98, etc. This pattern is also present in the attention maps of position 228 and position 278. }
    \vspace{-4mm}
    \label{figure:attention_map} 
\end{wrapfigure}

Quantization and sparsity approaches~\cite{yao2022zeroquant,park2022nuqmm,dettmers2022llm,frantar2022gptq,frantar2023massive,bansal2022rethinking,xiao2022smoothquant} have been studied in LLMs to reduce the model sizes. However, compressing the KV cache remains an open but challenging problem. First, training models at the scale of hundreds of billions of parameters on a large amount of data is prohibitively expensive. Thus, an ideal compression algorithm should be applicable without training. Second, emerging  applications such as dialogue systems require an extremely long context window. The maximum sequence length of LLMs is growing to over 32K~\cite{openai2023gpt4}. The size of the KV cache also grows linearly with sequence length. For scalability, an ideal compression algorithm should reduce the memory from the sequence length dimension. At last, compression should preserve LLMs' quality and in-context learning ability.

We go beyond the traditional model compression techniques to achieve such demanding requirements. We envision that not all tokens must be stored in memory for LLM to understand the context. Just like humans can skim through an article and grasp the main idea, LLMs may also be able to skim and comprehend. It is commonly observed that the attention score from one token follows a strong power law distribution~\cite{kitaevreformer,wang2020linformer,chen2021mongoose,chen2021scatterbrain,choromanskirethinking}, meaning that one token will only heavily attend to a small number of tokens. More importantly, we observe \textbf{Repetitive Attention Pattern} from different tokens in the sequence in a trained LLM( Figure ~\ref{figure:attention_map}). Certain tokens are more important throughout the paragraph. Specifically, for two different tokens, there are similarities between who they are heavily attending to and similarities between who they are ignoring. 

Inspired by the above observation, we articulate the \textbf{Persistence of Importance Hypothesis:} \emph{Only pivotal tokens, which had a substantial influence at one previous step, will have a significant influence at a future step. 
}
This hypothesis, if true, suggests that it is possible to foresee which token is likely to be important for future generations. Fortunately, we empirically verify that later tokens in the sentence mostly only attend to tokens that were heavily attended from the early tokens in a sentence. And the overlapping ratio is surprisingly high, over 90\% in most of the transformer layers (Figure ~\ref{figure:verification}).

\begin{table}[b]
\vspace{-4mm}
\centering
\small
\label{table:memory-breakdown}
\caption{The memory consumption of model weights and KV cache for three different LLMs at batch size 128 and sequence length 2048 shows that the KV cache dominates the memory consumption.}
\begin{tabular}{c|c|c|c|c}
\hline
Model     & \# of Layer & Hidden Size & Weights (GB) & KV cache (GB) \\ \hline
OPT-175B  & 96          & 12288       & 325          & 1152          \\ \hline
LLaMA-65B & 80          & 8192        & 130          & 640           \\ \hline
BLOOM     & 70          & 14336       & 352          & 950           \\ \hline
\end{tabular}
\vspace{-2mm}
\end{table}

Based on the above two findings, we present \name{} that exploits \hypothesis{} to realize LLM inference with a compressed KV cache. In Section~\ref{sec:method}, we present an efficient algorithm such that the size of KV cache is always less than a predetermined budget. And a theoretical guarantee justifies that such a compressed KV cache can approximate the attention output. In Section~\ref{sec:experiment}, we empirically evaluate \name{} and show that \name{} reduces the memory usage of KV cache up to $5 \times$ without degradation on model quality. Reduction in the KV cache can directly result in a larger batch size. Further, we adopt quantization and show its compatibility with \name{}.














\section{Problem Description and Related Work}

This paper considers the LLM inference workflow, specifically focusing on the memory usage for storing the keys and values in attention. Let $d$ be the hidden dimension of the model, $b$ be the batch size, and $s$ be the length of prompt sentences. 
We are given the trained model weights, $W_K^i \in {\mathbb{R}}^{d  \times d}$, $W_V^i \in {\mathbb{R}}^{d  \times d}$ for the key and value projection matrix at the $i^{th}$ transformer layer. 

The standard LLM inference consists of two stages: prompting and token generation.  In the prompting stage, the model takes the prompt sentences as the input, and the key/value embedding in attention is stored as a cache to reduce repetitive computation. Denote $x_{\text{prompt}}^i = [x_1^i, ..., x_p^i], x_{\text{prompt}}^i \in {\mathbb{R}}^{ b \times p \times d}$ as the input to attention at the $i^{th}$ transformer layer. Denote the key cache and value cache at layer $i$ as ${\cal{K}}^i, {\cal{V}}^i \in {\mathbb{R}}^{ b \times p \times d}$, ${\cal{K}}^i_0 =x_{\text{prompt}}^i  W_K^i, {\cal{V}}^i_0 = x_{\text{prompt}}^i W_V^i$. 

In the generation stage, the model starts with the stored KV cache in the prompting stage and generates one token at each step. At each step, the KV cache gets updated. Given the input to attention at step $t$ in the $i^{th}$ transformer layer $x^i_t \in {\mathbb{R}}^{ b \times 1 \times d} $. ${\cal{K}}^i_{t+1} = [{\cal{K}}^i_t, x^i_tW_K^i], {\cal{V}}^i_{t+1} = [{\cal{V}}^i_t, x^i_tW_V^i]$. 

\subsection{LLM Inference Memory Breakdown}
In this section, we provide the memory consumption breakdown of LLMs. The memory footprint consists of three parts: model weights, KV cache, and activation buffer. The size of model weights depends on model configuration, such as the number of transformer layers and hidden size. The size of the KV cache depends on model configurations, sequence length, and batch size. The size of the activation buffer depends on parallel strategy, model configurations, and implementation. The size of the activation buffer is considerably smaller than the previous two.  As shown in Table~\ref{table:memory-breakdown}, the size of the KV cache, 2.5$\times$-5$\times$ larger than model weights, can quickly become the bottleneck in memory consumption. At the same time, much research has been spent on extending the length of the context window. GPT-4-32K can process up to 32,768 tokens~\cite{openai2023gpt4}. Longer sequence length would make the KV cache memory problem even more severe. 

Assuming LLM generates until its maximum sequence length, we summarize the maximum batch size before going out of GPU memory on a box of 8 A100 80GB GPU in Table~\ref{table:max-batch-size}. At the GPT-3 scale with a maximum sequence length of 2048, batch size cannot exceed 35 without offloading. Small batch size limits the model inference throughput. 
\begin{table}[t]
\centering
\small
\label{table:max-batch-size}
\vspace{-2mm}
\caption{This table summarizes the maximum batch size before hitting out of memory on a box of 8 A100 80GB GPU when models are deployed with its maximum sequence length. }
\vspace{-2mm}
\begin{tabular}{c|c|c|c}
\hline
Model              & OPT-175B & LLaMA-65B & BLOOM \\ \hline
Maximum Batch Size & 34       & 102       & 36    \\ \hline
\end{tabular}
\vspace{-4mm}
\end{table}

\subsection{Efficient Attention}
Computing the attention matrix necessitates a time complexity of $O(n^2)$, where $n$ is the sequence length. As a result, a line of work has been proposed to mitigate the computation burden of the attention mechanism~\cite{kitaevreformer,wang2020linformer,chen2021mongoose,chen2021scatterbrain,choromanskirethinking}. These approaches exploit low-rank or sparsification to approximate the attention output. Besides, \cite{dao2022flashattention} realized exact efficient attention with wall-clock speed by optimizing the number of memory reads and writes. 
However, these approaches were evaluated mostly for training, focused on computation complexity, and did not address the KV-Cache memory usage introduced by auto-regressive language models. 

Recently, there is active research attempting to apply quantization or pruning in LLM ~\cite{yao2022zeroquant,park2022nuqmm,dettmers2022llm,frantar2022gptq,frantar2023massive,bansal2022rethinking,xiao2022smoothquant}. However, they mostly focus on reducing the size of model weights. Flexgen~\cite{sheng2023highthroughput} applies quantization and sparsification to the KV cache; however, the memory of the KV cache is not reduced regarding sequence lengths. It stores the quantized KV cache for all tokens in CPU memory and loads all attention keys from CPU memory to compute attention scores.

\section{The Persistence of Importance Hypothesis} \label{sec:hypos}

We first present one interesting observation upon which the \hypothesis{} is derived in Section~\ref{sec:observation}. In Section~\ref{sec:hypothesis}, we discuss the hypothesis in detail with empirical verification. Then, in Section \ref{sec:hypothesis_theory}, we provide theoretical intuition on the reason behind such model behaviors.

\subsection{Repetitive Attention Pattern.}
\label{sec:observation}
\textbf{Observation.} We are interested in the attention score from the position $t$ over all the words that come before it in the sentence. In Figure~\ref{figure:attention_map}, we provide three attention maps of a sentence randomly drawn from C4~\cite{2019t5} using OPT-6B. Each attention map is a discretized attention score calculated at a randomly decided position. We consider a score larger than $\frac{1}{t}$ as significant as $\frac{1}{t}$ indicates an averaging mixing score. High attention scores are marked with dark green. 

\textbf{Result.} High attention scores are observed at the same set of tokens from various positions in the sentence. In all three plots, we see dark green at sequence positions 27, 63, 98, 121, 152, and 177, suggesting that these tokens received high attention at all three positions. We observe similar model behavior at different transformer layers with different text inputs. More plots are in Appendix~\ref{appendix:obs}. 

\textbf{Implication.} Even though small differences exist, repetitive attention patterns are evident in the attention maps. There exist specific tokens that keep receiving high attention. Meanwhile, these attention maps show sparsity: only a few tokens have high attention scores.

\subsection{The Persistence of Importance Hypothesis}
\label{sec:hypothesis} 
The repetitive attention pattern suggests that specific tokens are influential throughout the sequence. A stricter claim is that these tokens are the only ones that could be significant for a future step. Thus, we articulate the \emph{persistence of importance hypothesis}.

\textbf{The Persistence of Importance Hypothesis.}  \emph{With a trained autoregressive language model, only pivotal tokens, which had a substantial influence at one previous step, will have a significant influence at a future step.}

If true, this hypothesis indicates the possibility of foreseeing what information in the previous sequences could be vital for future steps. This hypothesis is trivial when pivotal tokens include all tokens in the entire sentences.  However, a much more interesting case is when pivotal tokens are a subset of previous words. This would enable us to reduce the size of the KV cache by throwing away the embedding of non-important tokens. 

\begin{figure*}[t]
    \vspace{-6mm}
    \centering
    \subfigure[Persistence Ratio ]{
    \includegraphics[width=0.45\textwidth]{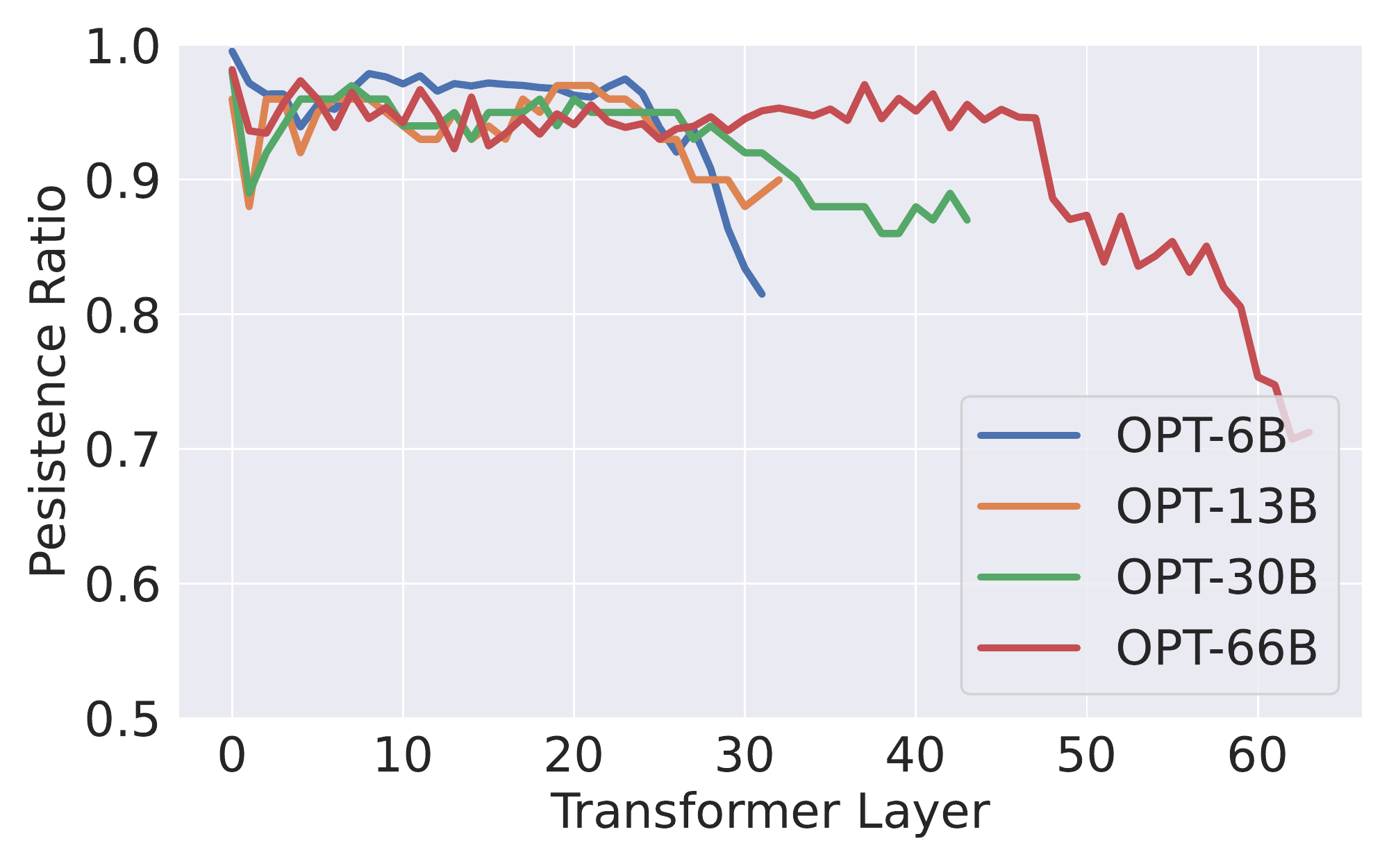}
    \label{figure:repetition_ratio}
    }
    \subfigure[Size of $S_{0 \rightarrow t}$ ]{
    \includegraphics[width=0.45\textwidth]{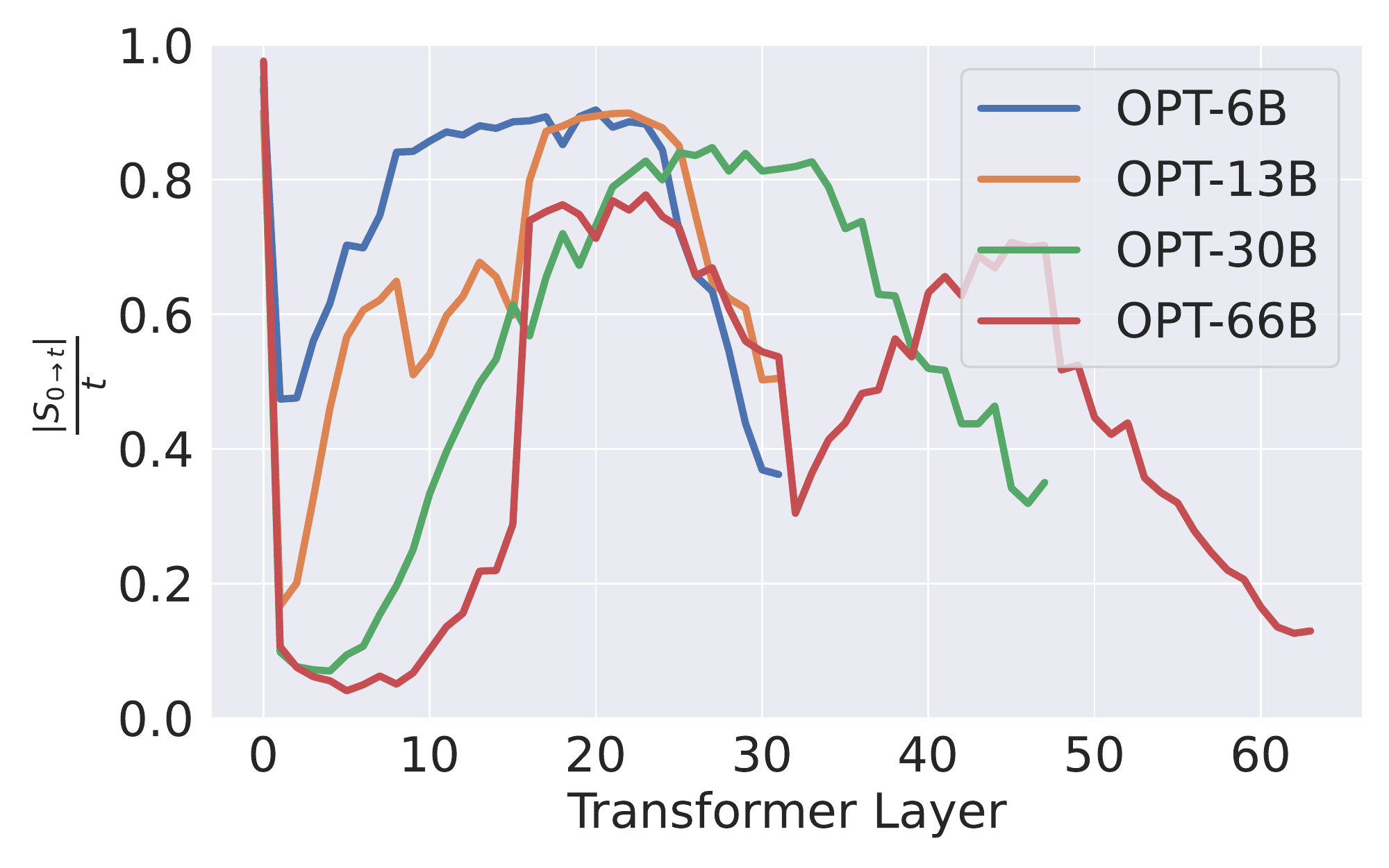}
    \label{figure:pivotal_size}
    }
   
    \caption{In this figure, we plot the persistence ratio and the corresponding size of the pivotal token set. The persistence ratio is over 95\% in most of the layers, with decreases at the later layers. Meanwhile, the number of pivotal tokens is considerably smaller than the sequence length. This suggests that the pivotal tokens of later half sentences are almost all included in the set of first halves.}
    \vspace{-4mm}
    \label{figure:verification}
\end{figure*}

\textbf{Pivotal Token.} One natural indication of a token's influence is the attention score. We consider a token pivotal for position $t$ if this token receives an attention score larger than threshold $\alpha$ from the token at position $t$. Let $S_{t}$ denote the set of pivotal tokens for position $t$. $S_{ a\rightarrow b}$ denote the set of pivotal tokens for every position from $a$ to $b$. 
\[
    S_{ a\rightarrow b} = \cup^{t = b}_{t = a}S_{t}
\]

\textbf{Verification.} We measure \emph{persistence ratio} as an empirical test the hypothesis. \emph{Persistence ratio} measures how many tokens in the pivotal token sets of the later part of the sentence are also in the pivotal token sets of the initial part of the sentence. Let $l$ denote the length of the sentence. We record $S_{ 1 \rightarrow t} \in \{ x_1, ... x_t \}$, tokens in $\{ x_1,..., x_t \}$ who received high attention from every position until $t$.  Then, we record $S_{ t+1 \rightarrow l} \in \{ x_1, ... x_t \}$,  tokens in $\{ x_1,..., x_t \}$ who received high attention from position after $t$. The persistence ratio is the intersection divided by the size of $S_{t+1 \rightarrow l}$. Formally,
\[
    \emph{Persistence Ratio} = \frac{ |S_{t+1 \rightarrow l} \cap S_{0 \rightarrow t}|}{ |\{ x | x \in S_{t+1 \rightarrow l}, x \in \{x_1,..., x_t\} \}|}
\]
At the same time, we measure $\frac{ |S_{0 \rightarrow t}|}{t}$. $|S_{0 \rightarrow t}| = t$ indicates that every token substantially impacted at least one position, which is the trivial case of \hypothesis{}. Our test is performed with OPT models~\cite{zhang2022opt} with different datasets such as OpenBookQA~\cite{OpenBookQA2018} and Wiki-Text~\cite{merity2016pointer}. In our verification, we set $t = \frac{l}{2}$, which measures the overlapping between the first and later half of the sentences. Same as in Section~\ref{sec:observation}, we set $\alpha = \frac{1}{t}$, which suggests an average score.


\textbf{Result.} We present our main results in Figure~\ref{figure:verification}. First, given the current criterion of pivotal token and $t$ value, the size of $S_{0 \rightarrow t}$ is considerably smaller than half of the sentence length. This verifies that we are not considering the trivial case of our hypothesis. Second, the persistence ratio is generally over 95\%, with dips in the later transformer layers. The pivotal token set of the later half sentences is mostly included in the set of the first half sentences. Combining these two pieces of empirical evidence, we see positive evidence for our hypothesis test. 


\textbf{Implication.} The hypothesis provides insights for understanding the behavior of LLMs and opens up new opportunities for reducing the KV cache memory. The hypothesis suggests the possibility of predicting the potentially influential tokens for future steps. The non-influential tokens are unnecessary to store in the memory, as they are unlikely to have high attention scores. This reduces the number of tokens stored in the KV cache and the computation required at the attention.  

\subsection{Attention Weights Decides the Pivotal Tokens}
\label{sec:hypothesis_theory}
In the previous section, we verified that the significant tokens would continue to be significant. In this section, we try to understand the reasons for such phenomena. We consider the token generation process of a simplified model: a single-layer transformer model with single-head attention.
\begin{equation}
    \label{eq:basic_transformer}
    x_{t+1} = \mathcal{F}\paren{a_{t}} \textrm{, where } a_{t} = \texttt{softmax}\paren{\sfrac{1}{t}\cdot x_tW_QW_K^\top X_{t-1}^\top}X_{t-1}W_VW_O
\end{equation}
$x_t\in\R^{1\times d}$ is a row vector. $X_{t-1}\in\R^{(t-1)\times d}$ denotes the aggregation of $x_1,\dots, x_{t-1}$, where the $j$th row is $x_j$. $W_Q, W_K, W_V\in\R^{d\times p}$ and $W_O\in\R^{p\times d}$ are the attention weights. Lastly, $\mathcal{F}:\R^{1\times d}\rightarrow\R^{1\times d}$ denotes the MLP block following attention block, a two-layer MLP with skip connections, given by
\begin{equation}
    \label{eq:basic_mlp}
    \mathcal{F}(x) = x + W_2\texttt{relu}(W_1x)
\end{equation}
We are interested in the attention scores $\alpha_{t} = \texttt{softmax}(\sfrac{1}{t}\cdot x_tW_QW_K^\top X_{t-1}^\top)$. Notice that $\alpha_{t,j}$ scales with $x_tW_QW_K^\top x_j^\top$. The following theorem characterizes the behavior of $x_tW_QW_K^\top x_j^\top$
\begin{theorem}
    \label{theo:what_matter_will_matter}
    Let $A = W_VW_OW_QW_K^\top$ and let $\lambda_K,\lambda_Q,\lambda_V,\lambda_O$ denote the largest singular values of $W_K, W_Q, W_V, W_O$, respectively. Consider the transformer in (\ref{eq:basic_transformer}) with normalized inputs $\norm{x_t}_2 = 1$ for all $t$. Let $c,\epsilon > 0$ be constants. Assume that $a_{t}x_{t+1}^\top \geq (1-\delta)\norm{a_{t}}_2$ with $\delta \leq \paren{\frac{c\epsilon}{\lambda_Q\lambda_K\lambda_V\lambda_O}}^2$. Then for all $x_{\ell}$ satisfying $x_{\ell}Ax_{\ell}^\top\geq c$ and $x_{\ell}Ax_{\ell}\geq \epsilon^{-1}\max_{j\in[t], j\neq \ell}x_jAx_{\ell}^\top$, it holds that
    \begin{equation}
        \label{eq:theo3.1}
        \frac{x_{\ell}Ax_{\ell}^\top}{\norm{a_{t}}_2}(\alpha_{t,\ell} - 3\epsilon) \leq x_{t+1}W_QW_K^\top x_j^\top\leq \frac{x_{\ell}Ax_{\ell}^\top}{\norm{a_{t}}_2}(\alpha_{t,\ell} + 3\epsilon)
    \end{equation}
\end{theorem}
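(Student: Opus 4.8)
The statement has a very particular shape: we want to relate the next-step attention logit $x_{t+1}W_QW_K^\top x_j^\top$ to the current-step quantities $\alpha_{t,\ell}$ and $x_\ell A x_\ell^\top$. I notice that the displayed inequality (\ref{eq:theo3.1}) has $x_j$ on the right-hand side but $x_\ell$ everywhere else, which strongly suggests the intended reading is $j = \ell$ — i.e.\ the logit onto the \emph{same} pivotal token $x_\ell$ persists, up to a $3\epsilon$-type error, across the step. So the goal is to show that $x_{t+1}W_QW_K^\top x_\ell^\top \approx \frac{x_\ell A x_\ell^\top}{\norm{a_t}_2}\,\alpha_{t,\ell}$, with additive slack controlled by $\epsilon$. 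The strategy is to substitute the definition of $x_{t+1}$, peel off the MLP, expand the attention average $a_t = \sum_{j} \alpha_{t,j}\, x_j W_V W_O$, and argue that in this sum only the $j=\ell$ term survives up to $O(\epsilon)$.

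\textbf{Step 1: remove the MLP.} Write $x_{t+1} = \mathcal F(a_t) = a_t + W_2\,\texttt{relu}(W_1 a_t)$. The hypothesis $a_t x_{t+1}^\top \geq (1-\delta)\norm{a_t}_2$ together with $\norm{x_{t+1}}_2 = 1$ (normalized inputs) forces $x_{t+1}$ to be within $O(\sqrt\delta)$ of the unit vector $a_t/\norm{a_t}_2$; concretely $\norm{x_{t+1} - a_t/\norm{a_t}_2}_2^2 = 2 - 2\,a_t x_{t+1}^\top/\norm{a_t}_2 \leq 2\delta$. So $x_{t+1} = a_t/\norm{a_t}_2 + r$ with $\norm{r}_2 \leq \sqrt{2\delta}$. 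This is the step where the MLP assumption gets used — it says the block is ``close to identity'' on the relevant input — and it is why the theorem is stated with the $a_t x_{t+1}^\top$ lower bound rather than any structural claim about $W_1, W_2$.

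\textbf{Step 2: expand the attention output and isolate the $\ell$ term.} Plug in $a_t = \paren{\sum_{j=1}^{t} \alpha_{t,j}\, x_j} W_V W_O$. Then
\[
x_{t+1} W_Q W_K^\top x_\ell^\top = \frac{1}{\norm{a_t}_2}\sum_{j=1}^{t}\alpha_{t,j}\, x_j W_V W_O W_Q W_K^\top x_\ell^\top + r W_Q W_K^\top x_\ell^\top = \frac{1}{\norm{a_t}_2}\sum_{j=1}^{t}\alpha_{t,j}\, x_j A x_\ell^\top + r W_Q W_K^\top x_\ell^\top.
\]
Split the sum into $j=\ell$ and $j\neq\ell$. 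The $j=\ell$ term is exactly $\frac{\alpha_{t,\ell}}{\norm{a_t}_2} x_\ell A x_\ell^\top$, the main term we want. For $j\neq\ell$, use the pivotal condition $x_\ell A x_\ell^\top \geq \epsilon^{-1}\max_{j\neq\ell} x_j A x_\ell^\top$ to bound $\abs{x_j A x_\ell^\top} \leq \epsilon\, x_\ell A x_\ell^\top$ (taking $x_\ell A x_\ell^\top \geq c > 0$ to handle the sign and nonnegativity), so $\abs{\sum_{j\neq\ell}\alpha_{t,j} x_j A x_\ell^\top} \leq \epsilon\, x_\ell A x_\ell^\top \sum_{j\neq\ell}\alpha_{t,j} \leq \epsilon\, x_\ell A x_\ell^\top$ since the $\alpha$'s sum to $1$. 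That accounts for one $\epsilon\, x_\ell A x_\ell^\top/\norm{a_t}_2$ of the slack.

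\textbf{Step 3: control the residual term.} Bound $\abs{r W_Q W_K^\top x_\ell^\top} \leq \norm{r}_2\,\lambda_Q\lambda_K\,\norm{x_\ell}_2 = \sqrt{2\delta}\,\lambda_Q\lambda_K$ using operator-norm bounds and $\norm{x_\ell}_2 = 1$. Now invoke $\delta \leq (c\epsilon/(\lambda_Q\lambda_K\lambda_V\lambda_O))^2$ to get $\sqrt{2\delta}\,\lambda_Q\lambda_K \leq \sqrt2\, c\epsilon/(\lambda_V\lambda_O)$. To fold this into the claimed error term $\tfrac{x_\ell A x_\ell^\top}{\norm{a_t}_2}\cdot O(\epsilon)$, one relates $c/(\lambda_V\lambda_O)$ to $x_\ell A x_\ell^\top/\norm{a_t}_2$: since $x_\ell A x_\ell^\top \geq c$ and $\norm{a_t}_2 = \norm{(\sum_j\alpha_{t,j}x_j)W_VW_O}_2 \leq \lambda_V\lambda_O\norm{\sum_j\alpha_{t,j}x_j}_2 \leq \lambda_V\lambda_O$, we get $\frac{x_\ell A x_\ell^\top}{\norm{a_t}_2} \geq \frac{c}{\lambda_V\lambda_O}$, hence $\sqrt2\,c\epsilon/(\lambda_V\lambda_O) \leq \sqrt2\,\epsilon\cdot\frac{x_\ell A x_\ell^\top}{\norm{a_t}_2}$. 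Combining the $j\neq\ell$ bound ($\leq \epsilon$) and the residual bound ($\leq \sqrt2\,\epsilon < 2\epsilon$) gives total slack $\leq 3\epsilon$ times $\tfrac{x_\ell A x_\ell^\top}{\norm{a_t}_2}$, which is exactly (\ref{eq:theo3.1}) (with $x_j$ read as $x_\ell$). The constant $c$ is a placeholder that cancels; its only role is to keep $x_\ell A x_\ell^\top$ bounded away from $0$ so the error is genuinely multiplicative.

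\textbf{Main obstacle.} The delicate point is Step 2's sign handling: $x_j A x_\ell^\top$ need not be nonnegative, and $A$ need not be symmetric, so I have to be careful that ``$\abs{x_j A x_\ell^\top} \leq \epsilon\, x_\ell A x_\ell^\top$'' is what the stated condition $x_\ell A x_\ell^\top \geq \epsilon^{-1}\max_{j\neq\ell} x_j A x_\ell^\top$ actually buys — it controls the max but literally only on one side. I expect the intended reading is that the condition is really on $\abs{x_j A x_\ell^\top}$, or that a companion lower-bound condition is implicit; reconciling the asymmetry of $A$ with the quadratic-form notation $x_\ell A x_\ell^\top$ (the theorem even writes ``$x_\ell A x_\ell$'' once without the transpose) is the part that needs the most care to state cleanly. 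The rest — Steps 1 and 3 — is routine operator-norm bookkeeping once the ``$x_{t+1} \approx a_t/\norm{a_t}_2$'' reduction is in hand.
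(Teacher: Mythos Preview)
Your proposal is correct and matches the paper's proof essentially step for step: the paper packages your Step~1 as a lemma showing $\norm{x_{t+1}-a_t/\norm{a_t}_2}_2\le\sqrt{2\delta}$, your Step~2 as a lemma bounding $\bigl|a_t W_Q W_K^\top x_\ell^\top - \alpha_{t,\ell}x_\ell A x_\ell^\top\bigr|\le \epsilon\, x_\ell A x_\ell^\top$, and then combines them exactly as in your Step~3 (including the same $\norm{a_t}_2\le\lambda_V\lambda_O$ bound and the same $\sqrt{2}\epsilon<2\epsilon$ slack to land on $3\epsilon$). Your diagnosis of the typos is also on target: the paper's lemma indeed uses the absolute-value condition $x_\ell A x_\ell^\top>\epsilon^{-1}\lvert x_j A x_\ell^\top\rvert$, and the displayed conclusion is meant with $x_\ell$ in place of $x_j$.
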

The proof is provided in Appendix \ref{theo:wmwm_proof}. Theorem \ref{theo:what_matter_will_matter} shows that under an assumption on the MLP in (\ref{eq:basic_mlp}), for all $x_{\ell}$ such that $x_{\ell}Ax_{\ell}^\top$ is large enough, $x_{t+1}W_QW_K^\top x_j^\top$ satisfies Equation (\ref{eq:theo3.1}). The assumption on the MLP $a_{t}x_{t+1}^\top \geq (1-\delta)\norm{a_{t}}_2$ essentially requires a large cosine similarity between the input and output of $\mathcal{F}$. This behavior can be empirically verified in Appendix~\ref{appendix:obs}. Essentially, skip connection dominates the output because $\norm{x}_2 \gg \norm{W_2\texttt{relu}(W_1x)}_2$, resulting in a cosine similarity close to one between input and output. Equation~(\ref{eq:theo3.1}) shows that despite a factor of $\frac{x_{\ell}Ax_{\ell}^\top}{\norm{a_{t}}_2}$, $x_{t+1}W_QW_K^\top x_j^\top$ almost scales with $\alpha_{t,\ell}$. Since $x_{t+1}W_QW_K^\top x_j^\top$ directly affects $\alpha_{t+1,\ell}$, this property shows that a larger $\alpha_{t,\ell}$ will potentially imply a large $\alpha_{t+1,\ell}$. 

Our theorem shows that the property in Equation (\ref{eq:theo3.1}) property only holds for $x_{\ell}$ such that $x_{\ell}Ax_{\ell}^\top$ is large. $A$ are trained attention weights. This condition may suggest that the trained weights $A$ selects $x_{\ell}$ as a pivotal token. Each attention is learned to identify some subspace. Only those tokens embedded inside these regions are pivotal for this attention. This would explain why only some specific tokens are always relevant.

\section{Sequential Token Generation Under budget} 
\label{sec:method}

\begin{algorithm}[t!]
    \caption{Inference with Budget KV cache}
    \label{alg:generation}
    \begin{algorithmic}
        \State \textbf{Input}: Memory Budget $B$, Maximum Sequence Length $T_{\text{max}}$
         \State \textbf{Key Cache} $ \Bar{\mathcal{K}} \in R^{n \times d}$, \textbf{Value Cache} $ \Bar{\mathcal{V}} \in R^{n \times d} $, where $n = 0$
        \While { $t < T_{\text{max}}$ }
            \State Model update $\Bar{\mathcal{K}}, \Bar{\mathcal{V}}$ such that $ n \leftarrow n + 1$
            \If{ $n > B$}:
                \State Compress KV cache using Algorithm~\ref{alg:sampling_pivotal_token} such that $n \leq B$.
            \EndIf
            \State $t \leftarrow t+1$
        \EndWhile
\end{algorithmic}
\end{algorithm}

\begin{algorithm}[t!]
    \caption{Compress KV Cache}
    \label{alg:sampling_pivotal_token}
    \begin{algorithmic}
        \State \textbf{Input}: Key Cache $ \Bar{\mathcal{K}} \in \mathbf{R}^{n \times d}$, Value Cache $ \Bar{\mathcal{V}} \in R^{n \times d} $, History Window Size $w$, Recent Window Size $r$, Drop Amount $m$, Generation Step $t$, 

        \State Importance Record $ I \leftarrow \vec{0} \in R^{t}$
        
        \For {$i \in \left[t - w, t\right] $} \Comment{Consider tokens within history window}
            \State  $I \leftarrow I + \alpha_i < \frac{1}{t} $ \Comment{Increment the counter for low score token}
        \EndFor
        
        \State $ I[:-r] \leftarrow  0$ \Comment{Keep cache within the recent window}
        \State Keep set $S_t \leftarrow \textit{Argsort}\left(I\right)[:-m] $
    
        \State Keep everything in $S_t$ in  $ \Bar{\mathcal{K}} \in R^{n \times d}$,  $ \Bar{\mathcal{V}} \in R^{n \times d} $ such that $n \leftarrow n - m$
\end{algorithmic}
\end{algorithm}

In this section, we present \name{}, which reduces the KV cache memory from the sequence length dimension without fine-tuning the model. In Section~\ref{sec:method_sample}, we describe how \name{} maintains the KV cache under a given budget. Section~\ref{sec:method_theory} provides a theoretical analysis of the algorithm and the approximation error.

\subsection{Budget KV Cache for Single Attention Head}
\label{sec:method_sample}
In this section, for the sake of the discussion, we drop the layer number notation $i$ and batch size dimension.  $\mathcal{K}_t, \mathcal{V}_t \in  R^{t \times d}$ denote for the KV cache until step $t$. $x_t\in\R^{1\times d}$ is a row vector that denotes the input to attention at step $t$. The output of an attention head at step $t$ can be written as,
\[
    a_{t} = \sum_{i=1}^{t} \alpha_{t,i} \mathcal{V}[i]_t  \textrm{, where } \alpha_{t, i} = \frac{\exp(\langle  x_t W_K , \mathcal{K}_t[i] \rangle )}{\sum_{i=1}^{t} \exp(\langle  x_t W_K , \mathcal{K}_t[i] \rangle )}
\]

\textbf{Intuition.} As shown in Section \ref{sec:hypos}, the attention scores $\alpha_{t,i}$ follow a strong power-law distribution. For the autoregressive generation process, if there exists an oracle such that we can identify the heavy score tokens before the future generation step, then the memory of the KV cache can be significantly reduced by only storing the heavy score tokens. Fortunately, the \hypothesis{} provides us with such an oracle. It states that only historical tokens with significant contributions toward previous generated tokens will have significant contributions toward future tokens. 

\textbf{Challenges.} LLMs are deployed on hardware with a fixed memory. The algorithm should maintain the cache under fixed memory to meet the hard requirement. Further, LLMs are already computationally intensive. The algorithm should avoid introducing much extra burden on computation. 

A fixed memory budget for one attention head is $B$ tokens. In other words, we can store key and value embedding for $B$ previous tokens. We describe the problem as follows,

\begin{definition}[Sequential generation at an attention head under budget $B$]
    Given a stream of token embedding, including prompt and previously generated tokens, denotes their input to the head as $\{x_1, \ldots, x_t, \ldots\}$. 
    The problem of sequential generation at an attention head under budget $B$ is maintaining a key cache $\Bar{\mathcal{K}}_t$ and value cache $\Bar{\mathcal{V}}_t$ such that $\Bar{\mathcal{K}}_t, \Bar{\mathcal{V}}_t \in R^{n \times d}$ and $n < B$.
\end{definition}

\textbf{Approach.} Inspired by the textbook solution of reservoir sampling and the Least Recent Usage cache replacement algorithm, \name{} reserves a fixed memory buffer for the KV cache. When the buffer is full,  \name{} drops stored but non-influential tokens from the cache. Naturally, attention scores are used as indicators of influence. We present the main algorithm in Algorithm~\ref{alg:generation} and Algorithm~\ref{alg:sampling_pivotal_token}. The influence measure is collected over a history window to reduce variance. And recent tokens are always kept because of the lack of information on their importance.  In practice, $w$ and $r$ are quite robust. We use $r = 10$ and $w = 400$ in all our experiments.

With a sampled KV cache, attention output can be computed by the following estimator 

\[
    \hat{a}_{t} = \sum_{i = 1}^n \hat{\alpha}_{t,i} \Bar{\mathcal{V}}_t[i]  \textrm{, where } \hat{\alpha}_{t, i} = \frac{\exp(\langle  x_t W_K , \Bar{\mathcal{K}}_{t}[i] \rangle )}{\sum_{i = 1}^n \exp(\langle  x_t W_K , \Bar{\mathcal{K}}_t[i] \rangle )}
\]

\textbf{Overhead Tradeoff}
At the compression step, an extra attention computation is introduced to collect the importance measurements over a history window. However, such compression is not required at every generation step. $m$ controls the frequency, and we use $m = 0.5B$ in our experiment. Further, steps after the compression have reduced attention computation because of the reduction in the KV cache. On the other hand, one can trade a tiny amount of memory to avoid the overhead by maintaining the importance record during generation steps in Algorithm~\ref{alg:generation}. 

\textbf{Allocating Budgets Across Attention Heads.} An LLM typically consists of $L$ transformer layers where each layer has $H$ heads. A total memory budget has to be distributed over layers and heads. Within each transformer layer, the budget is distributed evenly across heads. Within the entire model, we distributed the budget according to Figure~\ref{figure:verification}. The rule of thumb is to allocate more budget to later layers to compensate for the lower persistence ratio.

\subsection{Theoretical Analysis.} 
\label{sec:method_theory}

We study how much the tokens generated by the compressed KV cache deviate from the tokens generated by the original transformer using our simplified model in (\ref{eq:basic_transformer}). Let $\{\tilde{x}_t\}_{t=0}^T$ denote the tokens generated by the transformer with budget KV cache as in Algorithm \ref{alg:sampling_pivotal_token} with $m = 1$:
\[
    \tilde{x}_{t+1} = \mathcal{F}\paren{\tilde{a}_{t}} \textrm{, where } \tilde{a}_{t} = \softmax\paren{\sfrac{1}{t}\cdot\tilde{x}_t W_Q\tilde{\mathcal{K}}_t^\top}\tilde{\mathcal{V}}_t^\top W_O
\]
Notice that when $m = 1$, i.e., in each iteration, we drop one token with the lowest score, the cache will always maintain $B$ tokens. If the ranking of the attention scores does not change in each iteration, Algorithm \ref{alg:sampling_pivotal_token} will always drop tokens with the smallest attention scores.


For reference purposes, let $\{x_t\}_{t=0}^T$ denote the tokens generated by a vanilla transformer defined in (\ref{eq:basic_transformer}). We will bound the difference $\norm{x_t - \tilde{x}_t}_2$. 

\begin{theorem}
    \label{theo:error_bound_main}
    Let $\lambda_1,\lambda_2$ denote the largest singular values of $W_1$ and $W_2$ in (\ref{eq:basic_mlp}). Let
    \[
        \beta_{t,j} = \frac{\exp\paren{\sfrac{1}{t}\cdot\tilde{x}_tW_QW_K^\top\tilde{x}_j^\top}}{\sum_{i=1}^{t-1}\exp\paren{\sfrac{1}{t}\cdot\tilde{x}_tW_QW_K^\top\tilde{x}_i^\top}}
    \]
    and assume that each $\beta_{t,j} = cv_{t,j}$, where $v_{t,j}$ are sampled from a power-law distribution with pdf $f(x) = c(x+b)^{-k}$. Suppose that $\lambda_V\lambda_O(1+ \lambda_1\lambda_2)(1 + \lambda_Q\lambda_K) \leq \frac{1}{2}$. Let $T_{\min}$ and $T_{\max}$ denote the starting and maximum sequence lengths, respectively, and let $B \leq T_{\max}$ denote the budget as in Algorithm \ref{alg:sampling_pivotal_token}. If for all $t\in[T_{\min}, T_{\max}]$, $S_t$ contains only tokes with at most the largest $B$ values of $\beta_{t,j}$, that is, $\left|S_t\right| = B$ and $\min_{j\in S_t}\beta_{t,j} \geq \max_{j\notin\hat{S}_t}\beta_{t,j}$, then for all $\epsilon\in(0,1)$, with probability at least $1 - T_{\max}\exp\paren{-\frac{\epsilon^2b^2(T_{\min}-1)}{(k-2)^2(u-b)^2}} - T_{\max}\exp\paren{-\frac{2(T_{\min}-1)(1-\sfrac{B}{T_{\max}})^2}{(1-\epsilon)^2}}$, the following error bound must hold for all $t\in [T_{\min}, T_{\max}]$
    \begin{equation}
        \label{eq:err_bound}
        \E{\norm{x_t - \tilde{x}_t}_2} \leq \frac{2.1(1-\sfrac{B}{T_{\max}})}{(1-\epsilon)^2}\paren{k -  (k-1)\paren{\frac{1-\epsilon}{\sfrac{B}{T_{\max}}-\epsilon}}^{\sfrac{1}{(k-1)}}}
    \end{equation}
\end{theorem}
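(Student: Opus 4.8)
The plan is to bound $\norm{x_t - \tilde{x}_t}_2$ by induction on $t$, peeling off the error contributed at each generation step and then controlling how that error propagates forward. Since $x_{t+1} = \mathcal{F}(a_t)$ and $\tilde{x}_{t+1} = \mathcal{F}(\tilde{a}_t)$, the first ingredient is a Lipschitz bound on $\mathcal{F}$: from the skip-connection form in (\ref{eq:basic_mlp}) and the fact that $\texttt{relu}$ is $1$-Lipschitz, $\norm{\mathcal{F}(u) - \mathcal{F}(v)}_2 \leq (1 + \lambda_1\lambda_2)\norm{u-v}_2$. Next I would split $\norm{a_t - \tilde{a}_t}_2$ into a "weights-mismatch" term, where the same attention probabilities act on the true versus truncated key/value sets, and a "probability-mismatch" term coming from the difference between the softmax over all $t-1$ tokens and the softmax over only the $B$ retained tokens. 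Using $\norm{W_V W_O}_2 \leq \lambda_V\lambda_O$ and $\norm{W_Q W_K^\top}_2 \leq \lambda_Q\lambda_K$, together with the normalization $\norm{x_t}_2 = 1$, each of these reduces to controlling (a) the propagated input error $\norm{x_t - \tilde{x}_t}_2$ through the logits, and (b) the total dropped probability mass $\sum_{j\notin S_t}\beta_{t,j}$. The assumption $\lambda_V\lambda_O(1+\lambda_1\lambda_2)(1+\lambda_Q\lambda_K)\leq \tfrac12$ is exactly what makes the induction a contraction: the coefficient multiplying $\norm{x_t-\tilde{x}_t}_2$ in the recursion is at most $\tfrac12$, so the accumulated error is a geometric series bounded by twice the per-step error, which explains the constant $2.1$ (a little slack above $2$) in (\ref{eq:err_bound}).

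The core probabilistic estimate is to show that the dropped mass $\sum_{j\notin S_t}\beta_{t,j} = c\sum_{j\notin S_t} v_{t,j}$ is, with high probability, close to its expectation under the power-law pdf $f(x) = c(x+b)^{-k}$. Here I would (i) use a concentration argument — Hoeffding/Bernstein on the sum of the roughly $(t-1)$ i.i.d. draws $v_{t,j}$, and a separate concentration statement that the order statistics cut off at rank $B$ lands near the population quantile — which produces the two exponential failure-probability terms in the theorem, with $T_{\min}-1$ appearing as the effective sample size and $(1-B/T_{\max})$ as the tail fraction being dropped; then (ii) evaluate the truncated expectation $\E{\sum_{j\notin S_t} v_{t,j}}$ in closed form. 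Integrating $x f(x) = c\,x(x+b)^{-k}$ from the $B/T_{\max}$-quantile to the upper endpoint, and expressing the quantile via the CDF, is what yields the explicit expression $k - (k-1)\big(\tfrac{1-\epsilon}{B/T_{\max}-\epsilon}\big)^{1/(k-1)}$ on the right-hand side of (\ref{eq:err_bound}); the $\epsilon$'s enter precisely as the slack between the empirical and population quantiles. Combining with the Lipschitz/contraction bookkeeping and taking expectations (using the high-probability event to bound the quantile deviations uniformly over all $t\in[T_{\min},T_{\max}]$, hence the union-bound factor $T_{\max}$) gives the stated inequality.

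I expect the main obstacle to be step (i)–(ii) of the probabilistic estimate done \emph{uniformly in $t$} while keeping the dependence on $B/T_{\max}$, $k$, $b$, and the endpoint $u$ clean. Two subtleties deserve care: first, the $v_{t,j}$ at step $t$ are not literally independent of the generated trajectory $\tilde{x}_{\leq t}$ — the modeling assumption $\beta_{t,j} = c v_{t,j}$ with i.i.d. power-law $v_{t,j}$ is what sidesteps this, and I would state explicitly that the bound is in expectation over that randomness; second, translating "$S_t$ keeps the top-$B$ values of $\beta_{t,j}$" into a statement about a population quantile requires the empirical-quantile concentration, and getting the constants (the $2$ in the exponent, the $(k-2)^2(u-b)^2$ denominator) to match the theorem statement is the delicate bookkeeping. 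The Lipschitz recursion itself is routine given the spectral-norm assumption; the power-law integral is a one-line computation once the limits are identified. So the real work is the concentration-to-quantile-to-truncated-mean chain and its union bound.
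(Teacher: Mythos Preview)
Your proposal is correct and essentially matches the paper's proof: the same MLP Lipschitz bound $(1+\lambda_1\lambda_2)$, the same decomposition of $a_t-\tilde{a}_t$ into an input-mismatch piece, a softmax-Lipschitz piece, and a dropped-mass piece (the paper writes these as $\mathcal{T}_1,\mathcal{T}_2,\mathcal{T}_3$), the same contraction induction driven by $\lambda_V\lambda_O(1+\lambda_1\lambda_2)(1+\lambda_Q\lambda_K)\le\tfrac12$, and the same Hoeffding-plus-power-law-integral argument (with a union bound over $t$) for $\sum_{j\notin S_t}\beta_{t,j}$. The only minor slip is the direction of the truncation integral: since $S_t$ retains the \emph{largest} $B$ scores, the dropped mass corresponds to the smallest $v_{t,j}$, so you integrate $x f(x)$ from $0$ up to the lower quantile rather than from the quantile to the upper endpoint.
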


The definition of $\beta_{t,j}$ means the attention scores computed on the tokens generated by the compressed approach. Our theorem assumes that dropping the tokens depends on the attention score of the current iteration. (\ref{eq:err_bound}) provided a bound on the expected difference between the tokens generated in the budget and the original approach. The upper bound scales with $1-\sfrac{B}{T_{\max}}$. When $B=T_{\max}$, meaning that we are keeping all of the tokens, the error becomes zero. The term $k - (k-1)\paren{\frac{1-\epsilon}{B-\epsilon}}^{\sfrac{1}{(k-1)}}$ 
depends on the distribution that the attention scores are fitted to and is always less than one. 
With a strong power-law distribution, this term provides a further decrease to the error bound in (\ref{eq:err_bound}).

\begin{figure*}[t]
\vspace{-8mm}
  \centering
\subfigure[Language Modeling]{
    \includegraphics[width=0.42\textwidth]{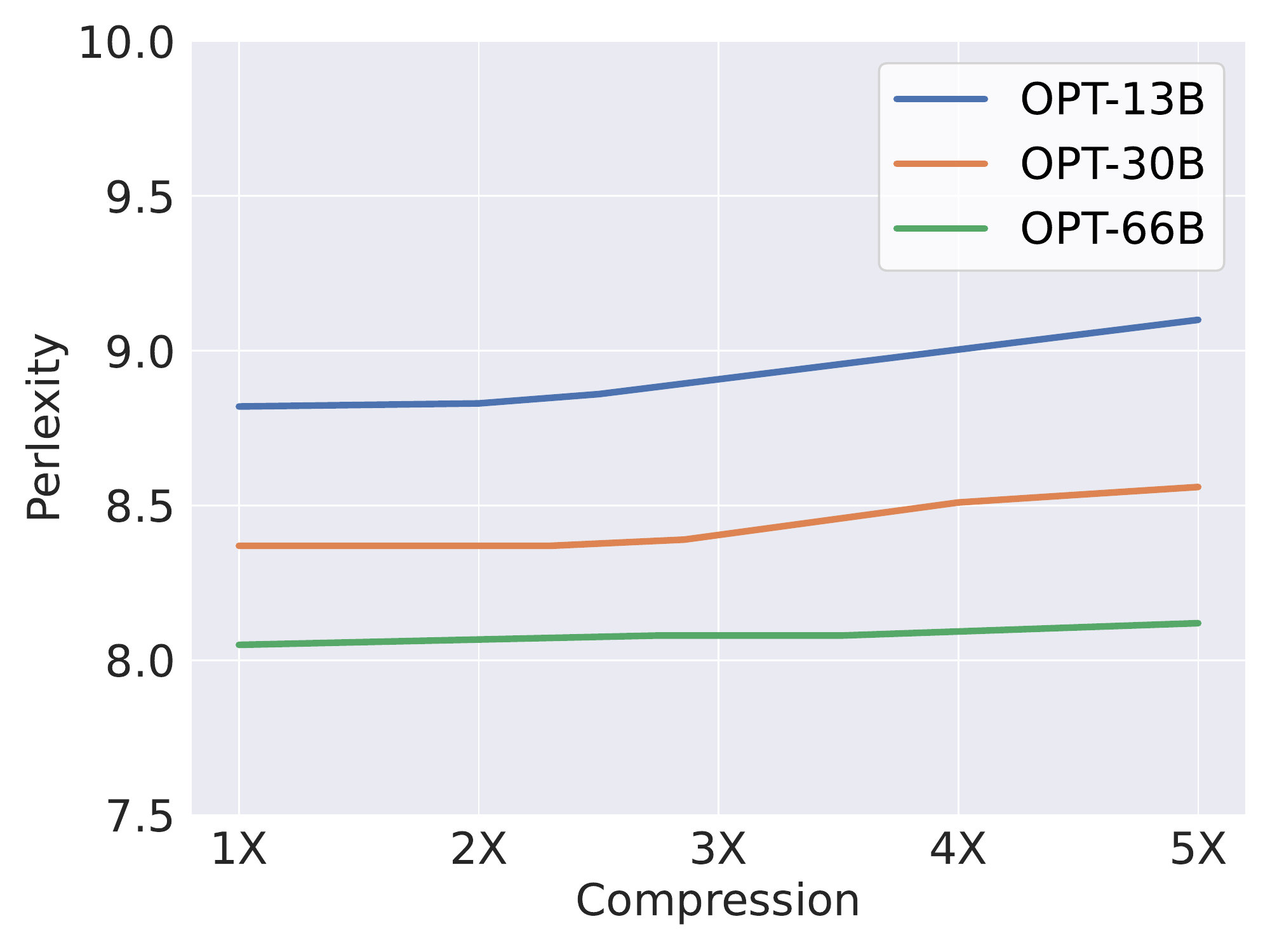}
  }
\subfigure[OPT-6B Five shot]{
    \includegraphics[width=0.42\textwidth]{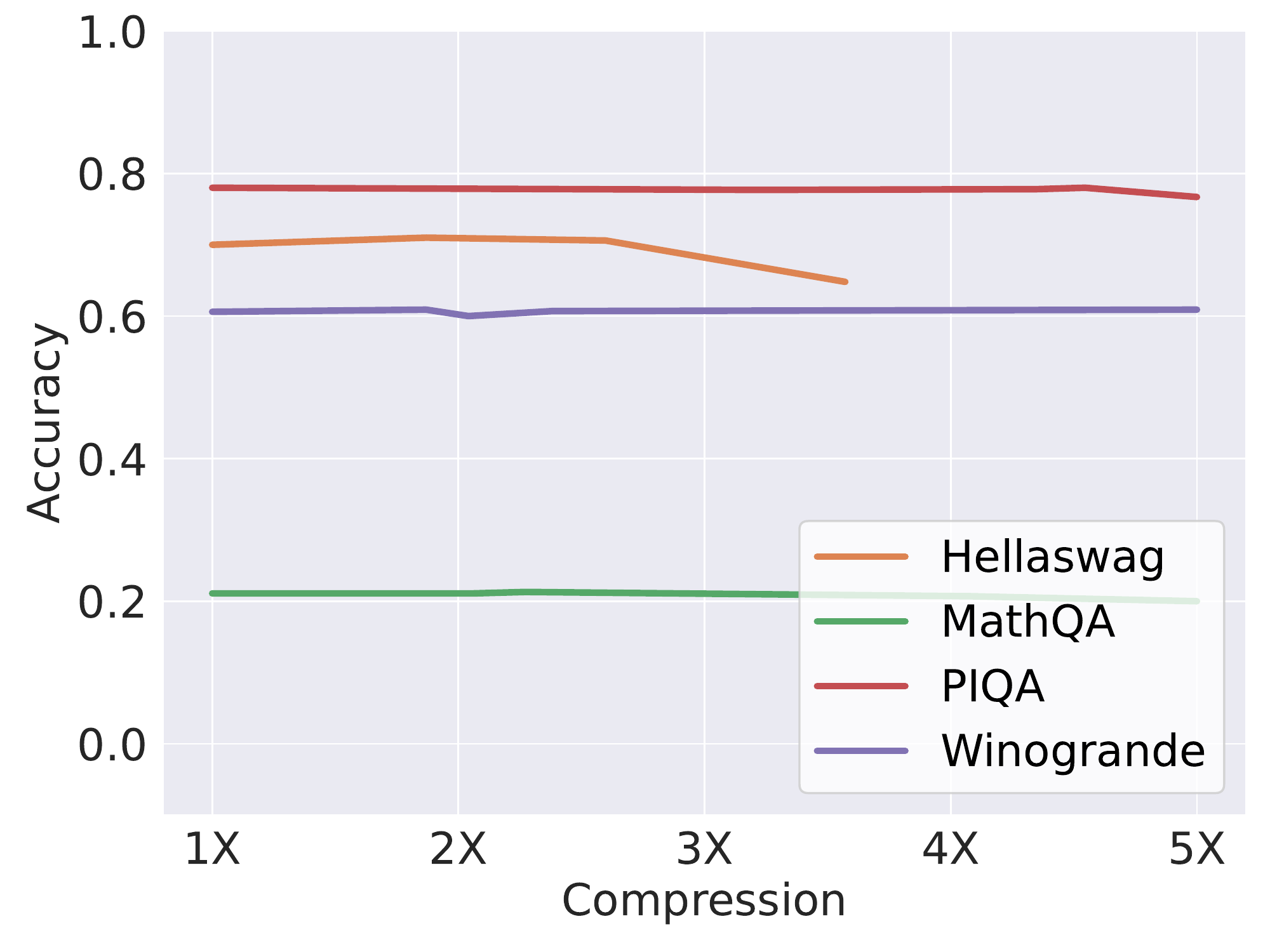}
  }
\vspace{-2mm}
\subfigure[OPT-13B Five shot]{
    \includegraphics[width=0.42\textwidth]{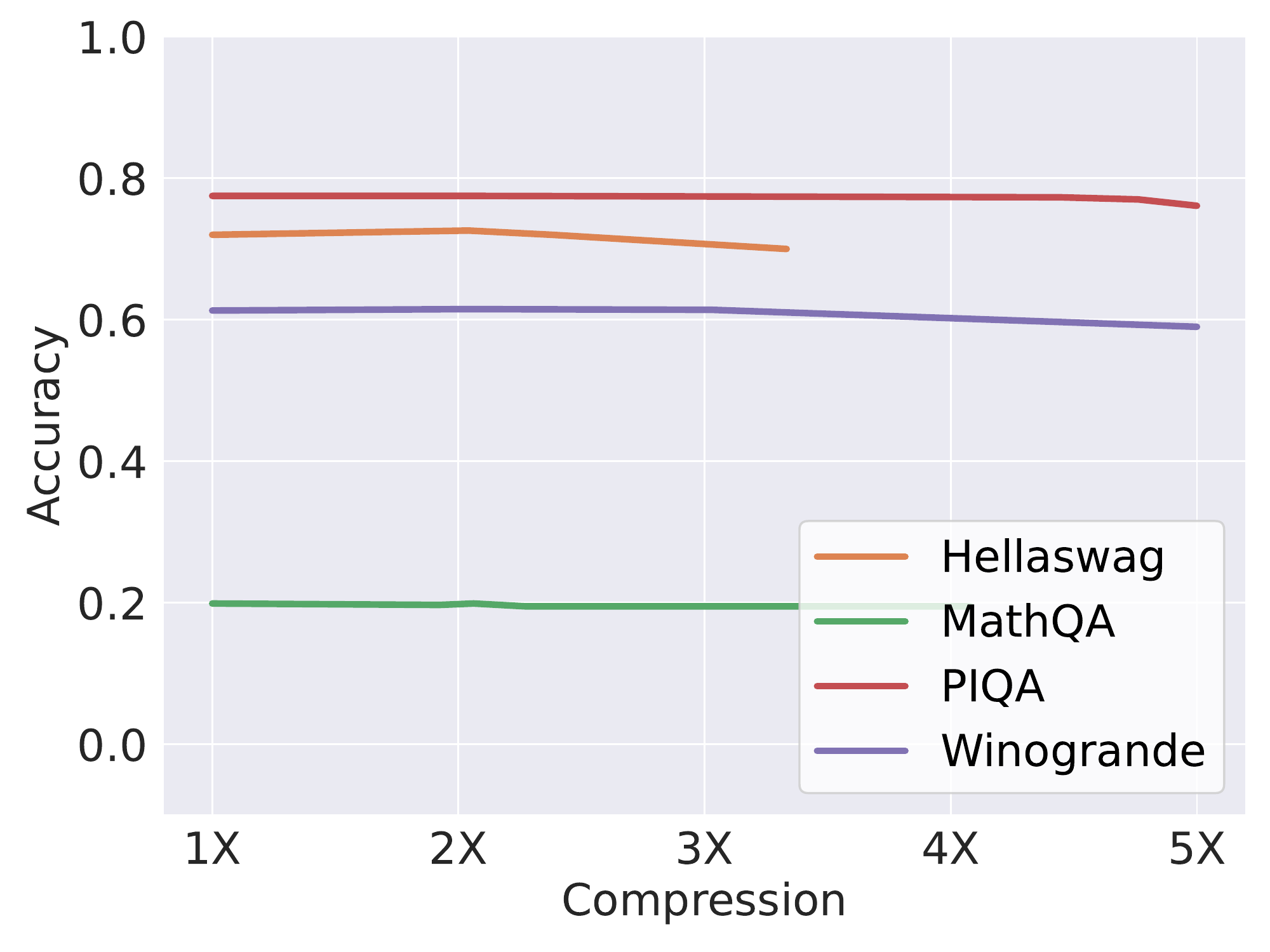}
  }
\subfigure[OPT-30B Five shot]{
    \includegraphics[width=0.42\textwidth]{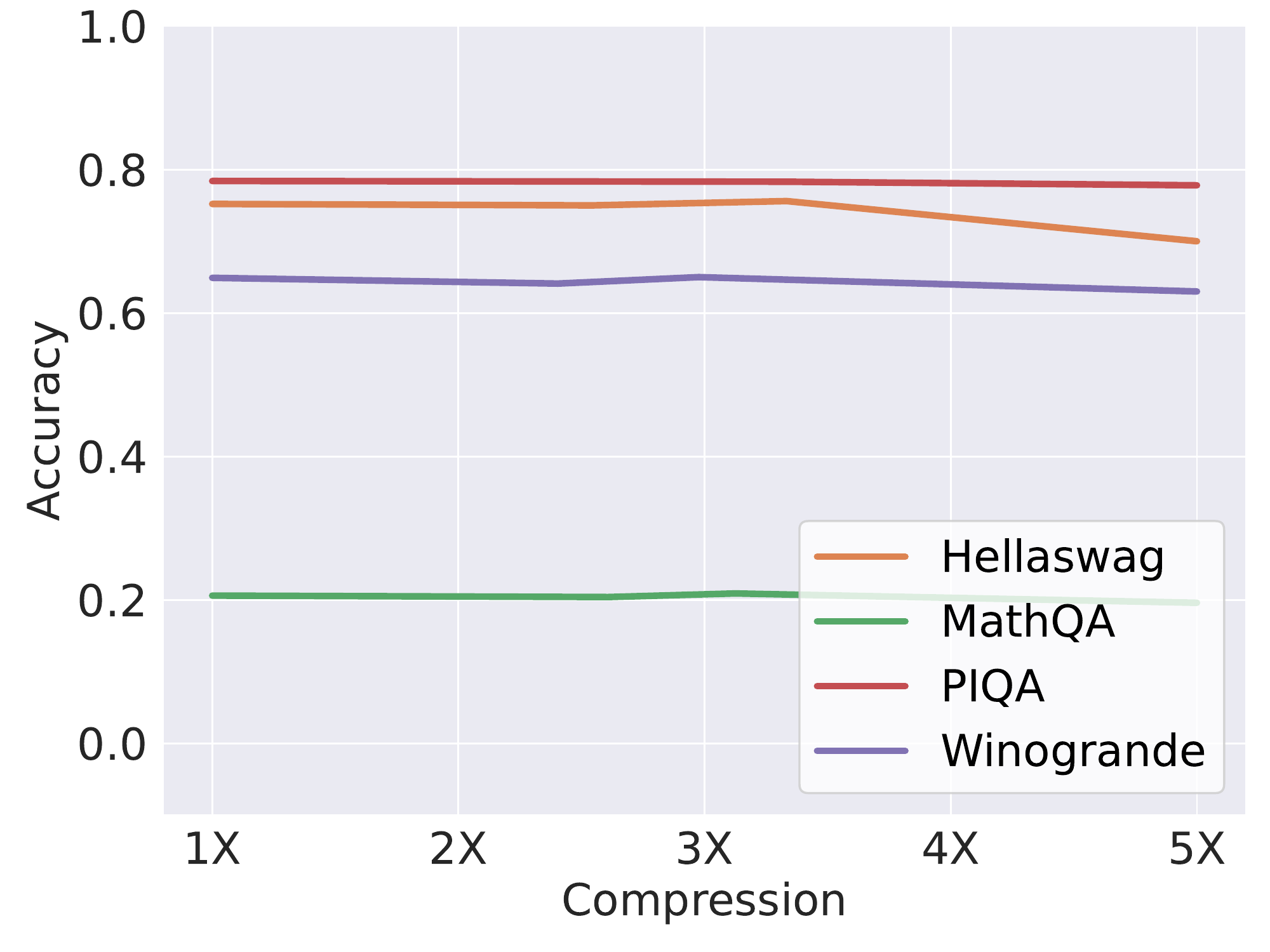}
  }
  \caption{ This figure shows the accuracy trend of \name{} on language modeling dataset and downstream tasks with different KV cache compression. In general, \name{} incurs no accuracy drop until $5\times$ compression on OPT-66B. }
  \vspace{-2mm}
    \label{fig:exp_accvsrate} 
\end{figure*}

\section{Empirical Evaluation}
\label{sec:experiment}

In this section, we present the results that demonstrate \name{} achieves up to 5$\times$ reduction in the KV cache memory compared to the standard model with no accuracy loss. We also show that \name{} is compatible with 4-bit quantization. 

\textbf{Experiment Setting.}
We compare the accuracy of \name{}-OPT against the original OPT on one language model datasets C4~\cite{2019t5} and a number of few-shot downstream tasks: Hellaswag~\cite{zellers2019hellaswag}, MathQA~\cite{radford2019language},  PIQA~\cite{Bisk2020}, Winogrande~\cite{ai2:winogrande}. 
We use lm-eval-harness~\cite{eval-harness} to evaluate few-shot tasks. Our experiments are conducted on NVIDIA 4 A100 40GB GPU servers. 

\textbf{No Accuracy Drop untill 5$\times$.} In Figure~\ref{fig:exp_accvsrate}, we present \name{}'s accuracy trend where 1$\times$ denotes the original OPT. In the language modeling setting, perplexity is the lower the better. For OPT-6B, perplexity is maintained until 50\% of the original KV cache size for OPT-13B. For OPT-66B, perplexity is maintained until 75\% of the original KV cache. We observe a flatter accuracy trend as the model size grows, which is exceptionally encouraging.  This suggests that \name{} can scale with the model size.
Downstream tasks are usually less sensitive to perturbation and bear more variance in terms of accuracy. We evaluate the 5-shot setting and 1$\times$ denotes the original OPT model. For Winogrande and  MathQA, accuracy is maintained even after 5$\times$ compression for OPT-66B. Similar to the language modeling setting, \name{} performs better at larger models. Generally, accuracy is maintained with 15\% - 30\% of the original KV cache size. 


\begin{wraptable}{l}{7cm}
\scriptsize
\centering
\vspace{-2mm}
\caption{ Applying 4-bit quantization on top of \name{} on Hellaswag.}
\resizebox{\linewidth}{!}{
\centering
\small
\setlength{\tabcolsep}{5pt}
\renewcommand{\arraystretch}{1.15}
\begin{tabular}{ccc}
\hline
 \multicolumn{3}{c}{\multirow{1}{*}{\textsc{OPT-6B}}} \\
\hline
   Original &  \name  &  \name + 4-bit \\

 0.702 & 0.706 & 0.704  \\
\hline
 \multicolumn{3}{c}{\multirow{1}{*}{\makecell{\textsc{OPT-13B}}}}\\
\hline
  Original &  \name  &  \name + 4-bit  \\
  0.720 & 0.720 & 0.720 \\
\hline
\end{tabular}
}
\vspace{-4mm}
\label{table:quantization}
\vspace{-0mm}
\end{wraptable}

\textbf{Compatible with 4-bit Quantization}
We test the compatibility of quantization and \name{} at $2\times$ compression. We adopt 4-bit quantization following~\cite{sheng2023highthroughput}. Even Hellaswag is most sensitive based on Figure~\ref{fig:exp_accvsrate}, adding quantization doesn't introduce compounded errors. 

\begin{wrapfigure}{r}{4.5cm}
    \vspace{-18mm}
    \includegraphics[width=0.3\textwidth]{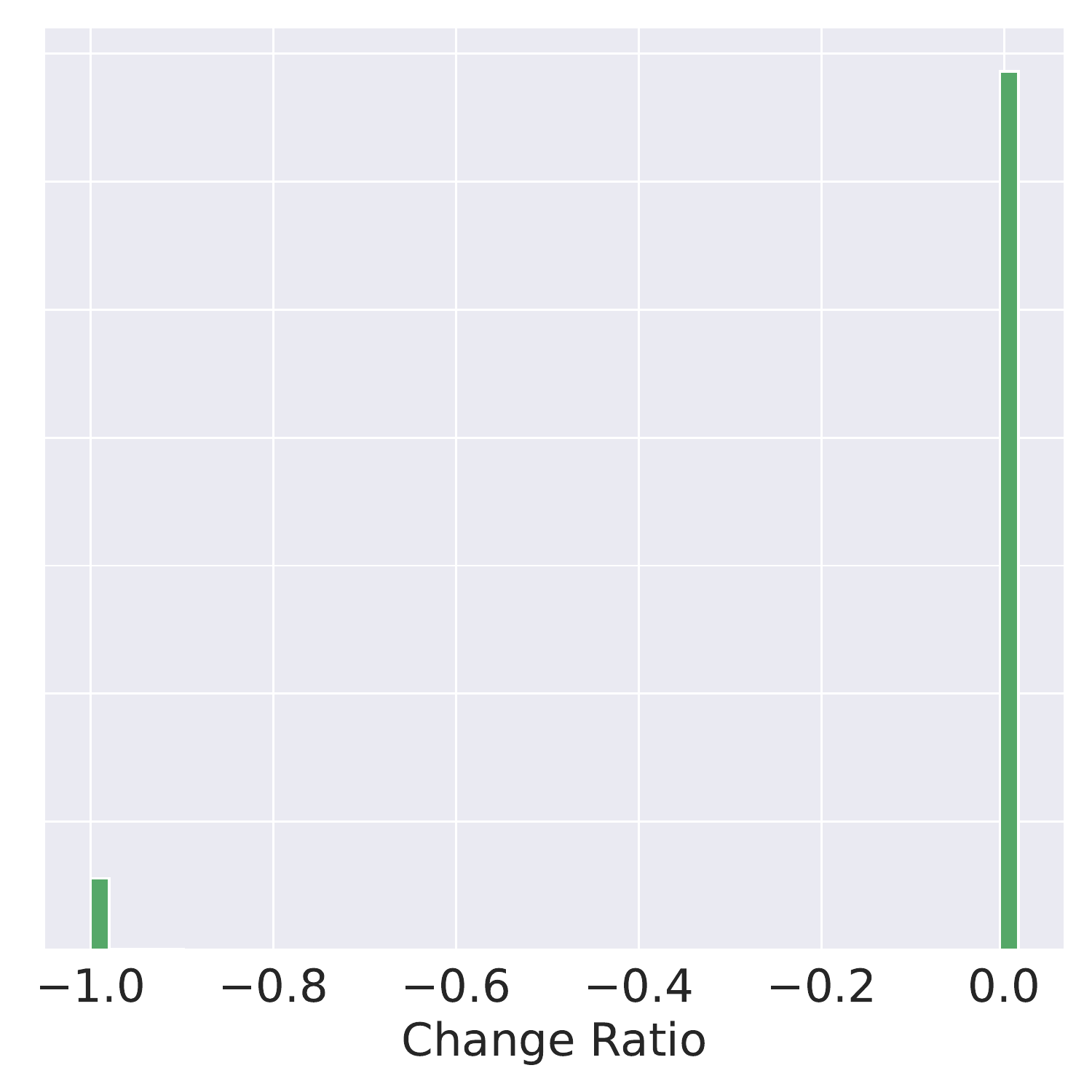}
    \vspace{-4mm}
    \caption{ Score between OPT and \name{}. }
    \vspace{-1mm}
    \label{figure:change_ratio} 
    \vspace{-1mm}
\end{wrapfigure}

\textbf{Ablation on Attention Score Error.} We present the change ratio in attention score between original OPT-13B and \name{} OPT-13B at $3 \times$ compression on C4 in 
Figure~\ref{figure:change_ratio}. We observe the attention score generated from \name{} is almost the same as the original KV cache, which also echoes Theorem~\ref{theo:error_bound_main}. The change ratio is calculated as $ \frac{\alpha_s - \alpha_o}{\alpha_o}$ where $\alpha_s$ is the \name{} attention score and  $\alpha_o$ is the original score. From Figure~\ref{figure:change_ratio}, we observe that the change ratio is centered around 0. -1 indicating that $\alpha_s$ is significantly smaller compared to the original, suggesting that a small portion of the important tokens are dropped in the cache. To explain the above observation of \name{}, we denote the $n$ number of tokens with the highest score as $\{x^{top\_n}_t\}_{t=0}^T$. 
Then, for any other sets of tokens $\{x'_t\}_{t=0}^T$ that has no greater than $n$ tokens, we can easily prove that $similarity\left(x^{topB}_t, x_t\right) \leq \left(x'_t, x_t\right)$. Thus, \name{} gives the most similar output as the original model at all layers.


\section{Discussion, Limitation, and Future Work}
We discover repetitive attention patterns given trained language models. One interesting question that needs to be answered is whether such behavior is a model architecture bias or an unexpected training outcome. For such purpose, we perform the same experiment with a randomly initialized OPT, and compare it against the results presented in Section~\ref{sec:observation}. As shown in Figure~\ref{obs:random_model_attention_map}, the repetitive attention pattern does not exist in randomly initialized models. Apart from an efficiency deployment perspective, could such repetitive attention patterns contribute to some known problem in language generation such as repetitions? It may be worth investigating the relationship between repetitive attention patterns and undesired generations.  

\begin{wrapfigure}{R}{8.9cm}
    \centering
    \vspace{-5mm}
    \subfigure[Attention map of the token at position 178]{
    \includegraphics[width=0.6\textwidth]{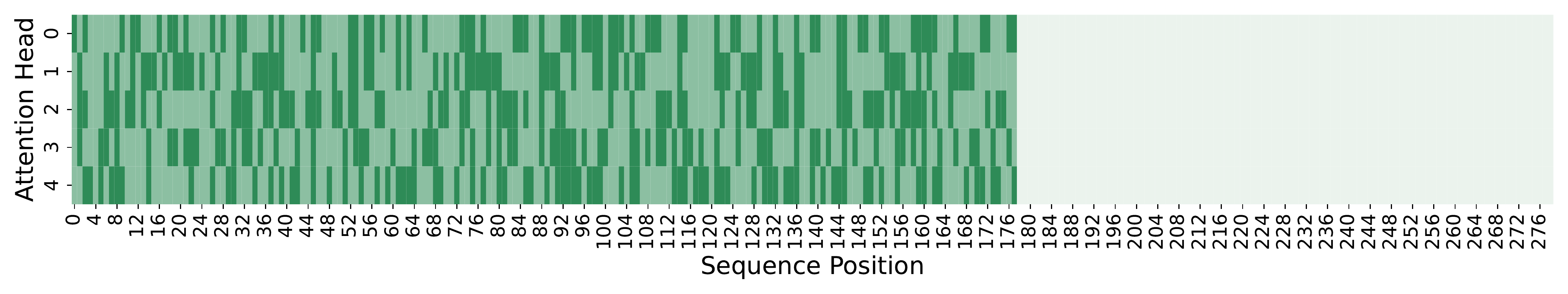}
    \label{obs:attn_score_t1}
    }
    \vspace{-3mm}
    \subfigure[Attention map of the token at position 228]{
    \includegraphics[width=0.6\textwidth]{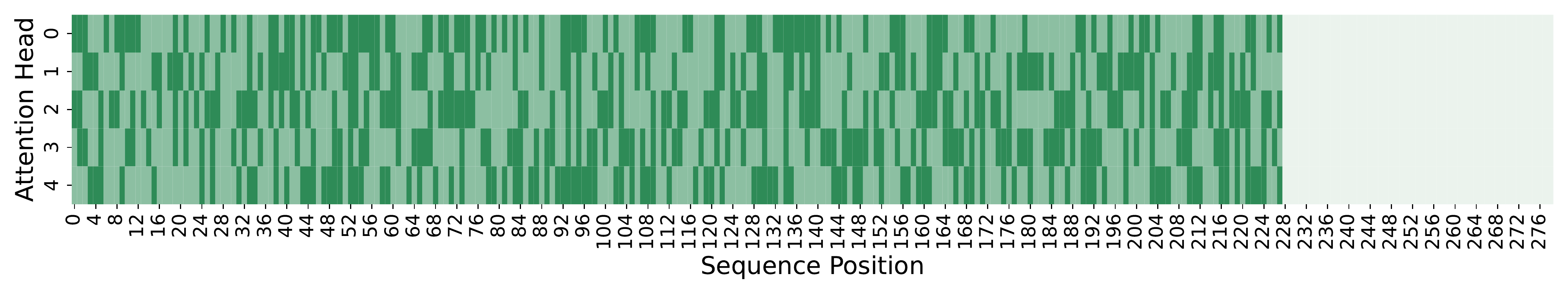}
    \label{obs:attn_score_t2}
    }
    \caption{ We plot the attention map corresponding to Section~\ref{sec:observation} but with a randomly initialized OPT. We observe no repetitive attention for a randomly initialized model.}
    \label{obs:random_model_attention_map} 
    \vspace{-3mm}
\end{wrapfigure}

Due to the limitation of the server in academics, the largest model we can fit is OPT-66B. We try to understand the behavior and verify the generality across the different models and datasets. However, we cannot access the training process and fail to know exactly how an LLM is trained to exhibit such behavior. Experiments with the large model create carbon dioxide emissions. However, our work improves the efficiency of LLM, and we  foresee no negative impacts.

\section{Conclusion}

Inspired by our intriguing findings that pivotal tokens exert a lasting influence on future steps, we developed \name{} to leverage this observation to reduce the memory usage of KV cache. Our method achieves memory reductions of $5\times$ in the KV cache without compromising the performance of LLMs. Furthermore, we demonstrate the compatibility of \name{} with quantization techniques, opening up the possibility of reducing memory usage in both the representation and sequence length dimensions.

\newpage
\bibliographystyle{unsrt}
\bibliography{ref}

\newpage
\appendix

\section*{Appendix}

\section{More Observation Plots}
\label{appendix:obs}

\subsection{Repetitive Attention Pattern}
We provide the attention map similar to Figure~\ref{figure:attention_map} but from a different transformer layer on the same text in Figure~\ref{figure:appendix_attention_map_l5}, Figure~\ref{figure:appendix_attention_map_l10}, Figure~\ref{figure:appendix_attention_map_l15} and Figure~\ref{figure:appendix_attention_map_l20}. A repetitive pattern and attention sparsity can be observed across layers. 

\begin{figure*}[h]
    \centering
    \subfigure[Attention map at position 178]{
    \includegraphics[width=0.7\textwidth]{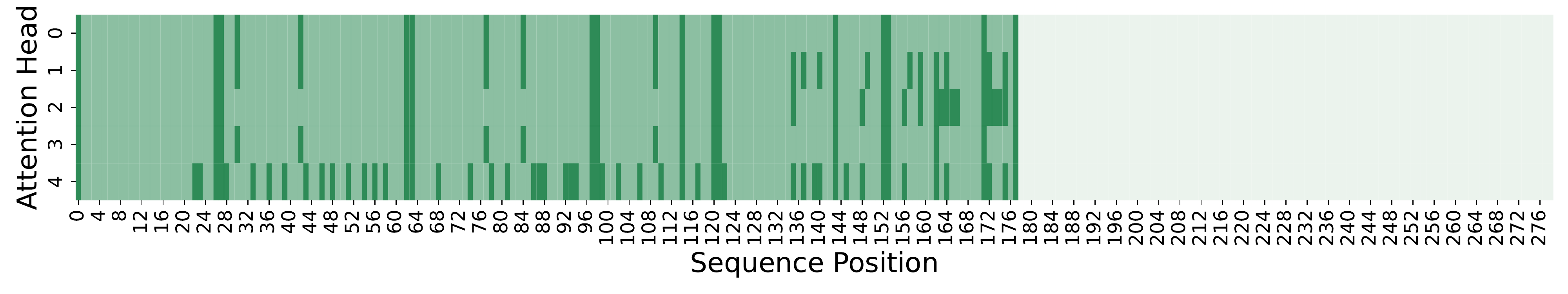}
    }
    \subfigure[Attention map at position 228]{
    \includegraphics[width=0.7\textwidth]{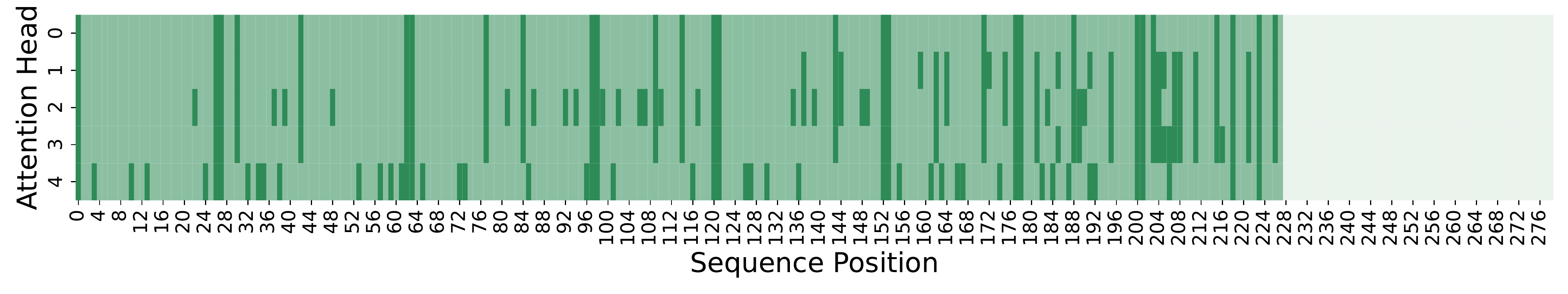}
    }
    \subfigure[Attention map at position 278]{
    \includegraphics[width=0.7\textwidth]{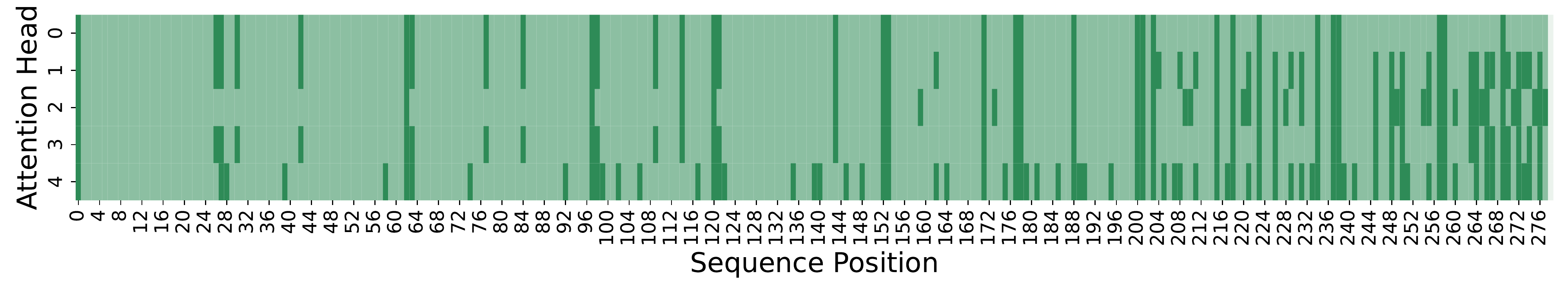}
    }
    \caption{ Attention Map at Layer 5 }
     \label{figure:appendix_attention_map_l5}
\end{figure*}
\begin{figure*}[h]
    \centering
    \subfigure[Attention map at position 178]{
    \includegraphics[width=0.7\textwidth]{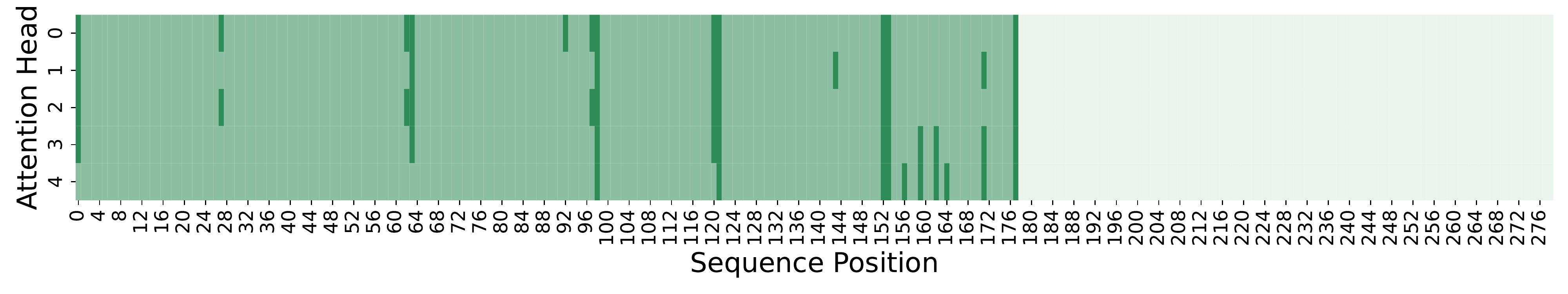}
    }
    \subfigure[Attention map at position 228]{
    \includegraphics[width=0.7\textwidth]{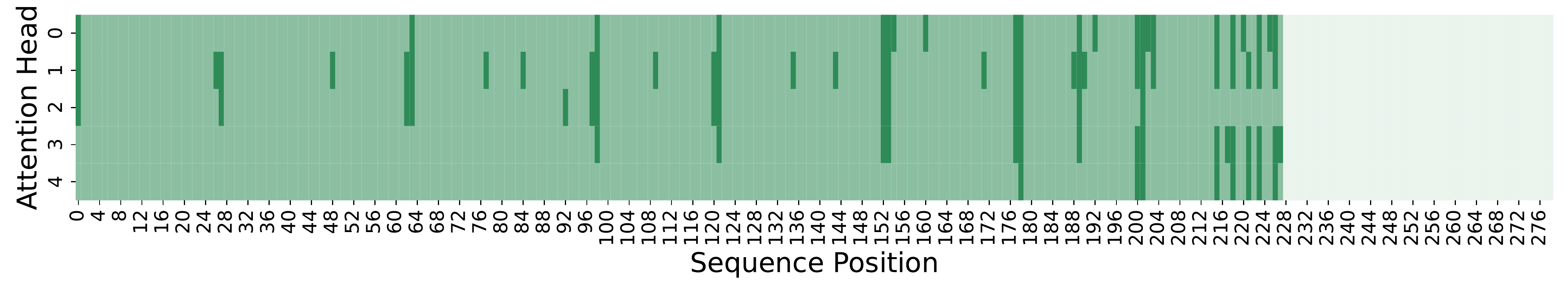}
    }
    \subfigure[Attention map at position 278]{
    \includegraphics[width=0.7\textwidth]{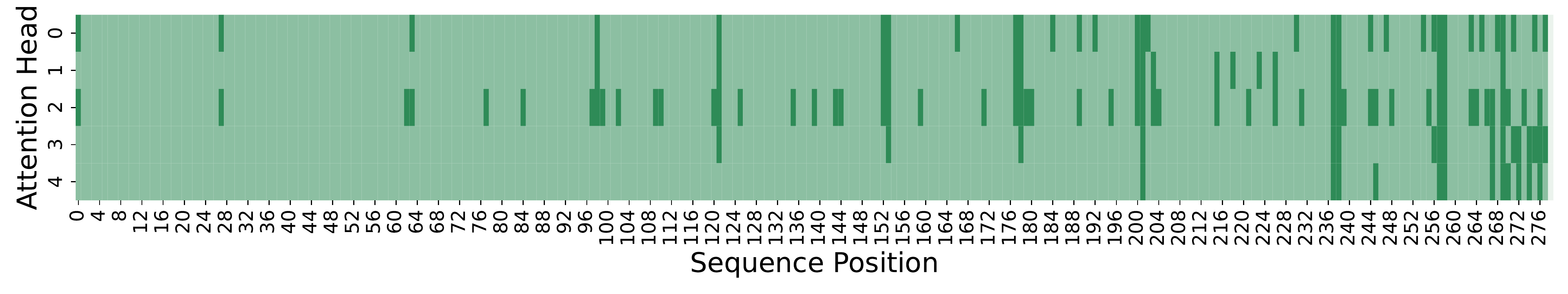}
    }
    \caption{ Attention Map at Layer 10 }
    \label{figure:appendix_attention_map_l10}
\end{figure*}
\begin{figure*}[h]
    \centering
    \subfigure[Attention map at position 178]{
    \includegraphics[width=0.7\textwidth]{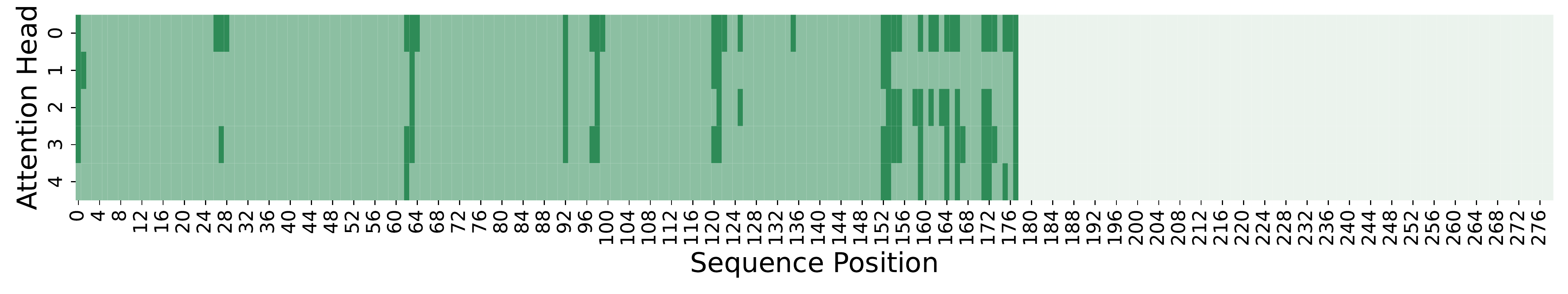}
    }
    \subfigure[Attention map at position 228]{
    \includegraphics[width=0.7\textwidth]{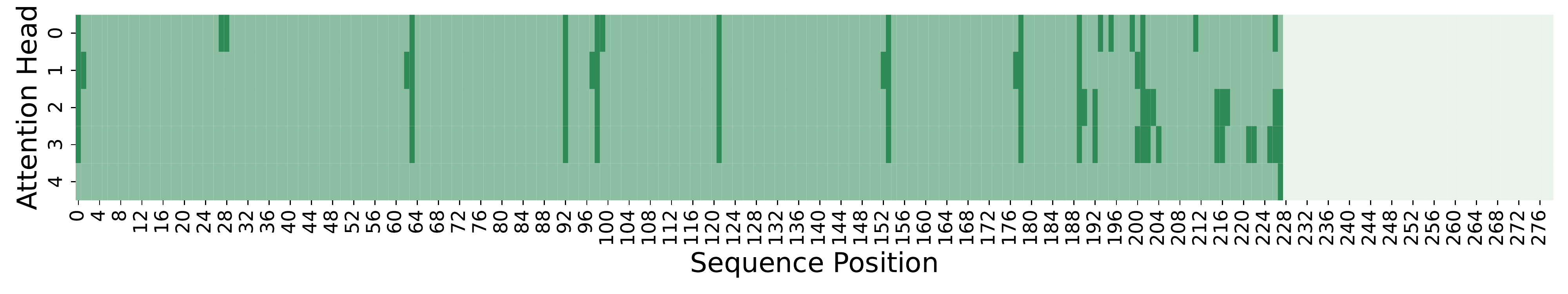}
    }
    \subfigure[Attention map at position 278]{
    \includegraphics[width=0.7\textwidth]{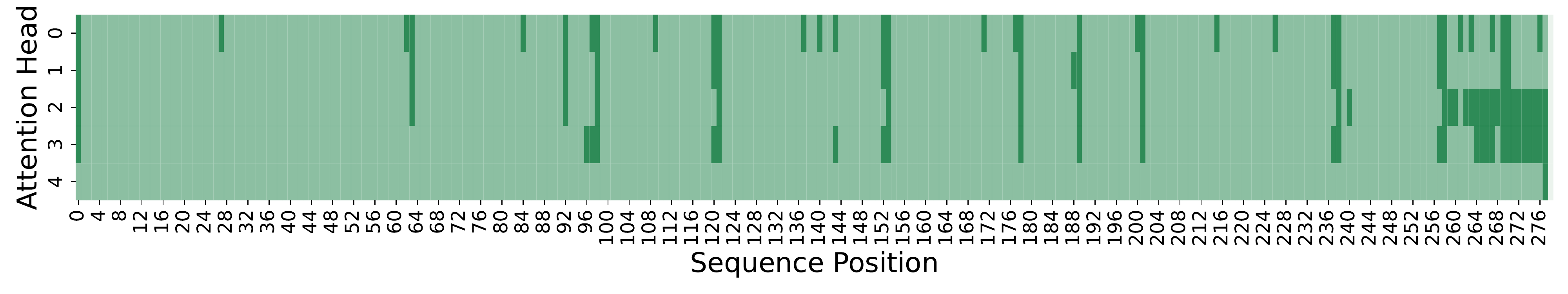}
    }
    \caption{ Attention Map at Layer 15 }
     \label{figure:appendix_attention_map_l15}
\end{figure*}
\begin{figure*}[h]
    \centering
    \subfigure[Attention map at position 178]{
    \includegraphics[width=0.7\textwidth]{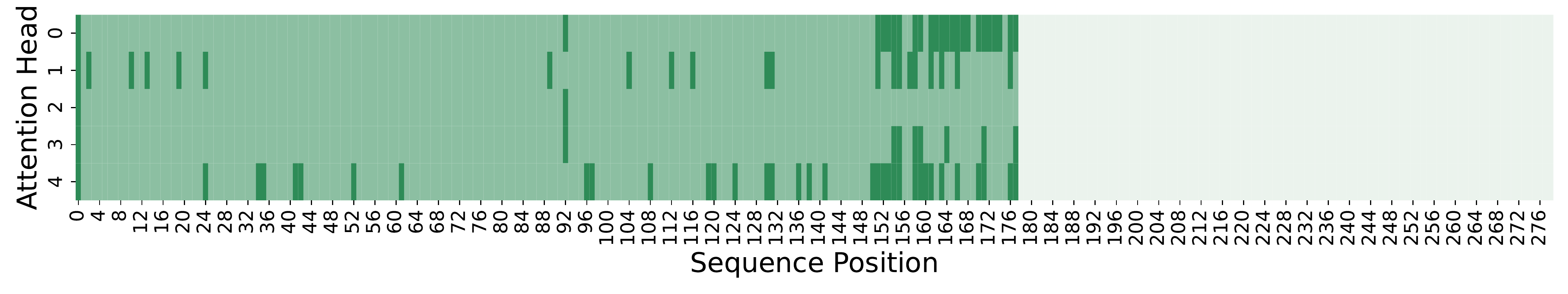}
    }
    \subfigure[Attention map at position 228]{
    \includegraphics[width=0.7\textwidth]{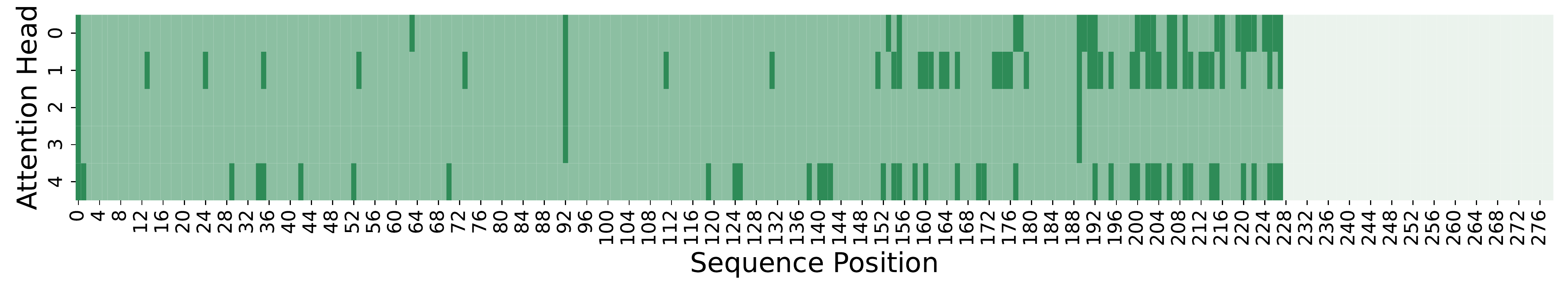}
    }
    \subfigure[Attention map at position 278]{
    \includegraphics[width=0.7\textwidth]{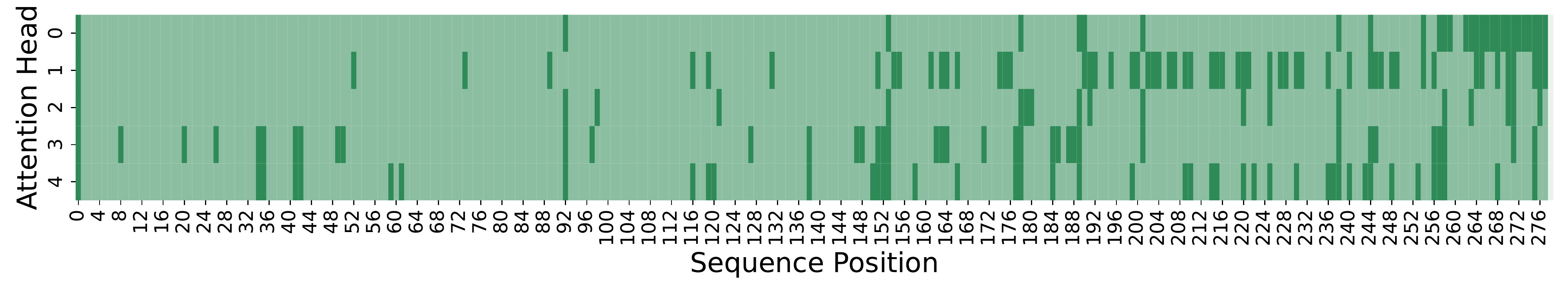}
    }
    \caption{ Attention Map at Layer 20}
    \label{figure:appendix_attention_map_l20}
\end{figure*}

\newpage

\subsection{Cross Layer Cosine Similarity}
In Section~\ref{sec:hypothesis_theory}, our analysis assumes a large cosine similarity between the input and output of $\mathcal{F}$. Here, we provide empirical evidence to support such an assumption in Figure~\ref{figure:appendix_mlp}. Because of the residual connection in $\mathcal{F}$ and the domination of $x$, the cosine similarity between $x$ and $\mathcal{F}(x)$ is extremely high.
\begin{figure*}[h]
    \centering
    \subfigure[Cosine Similarity]{
    \includegraphics[width=0.6\textwidth]{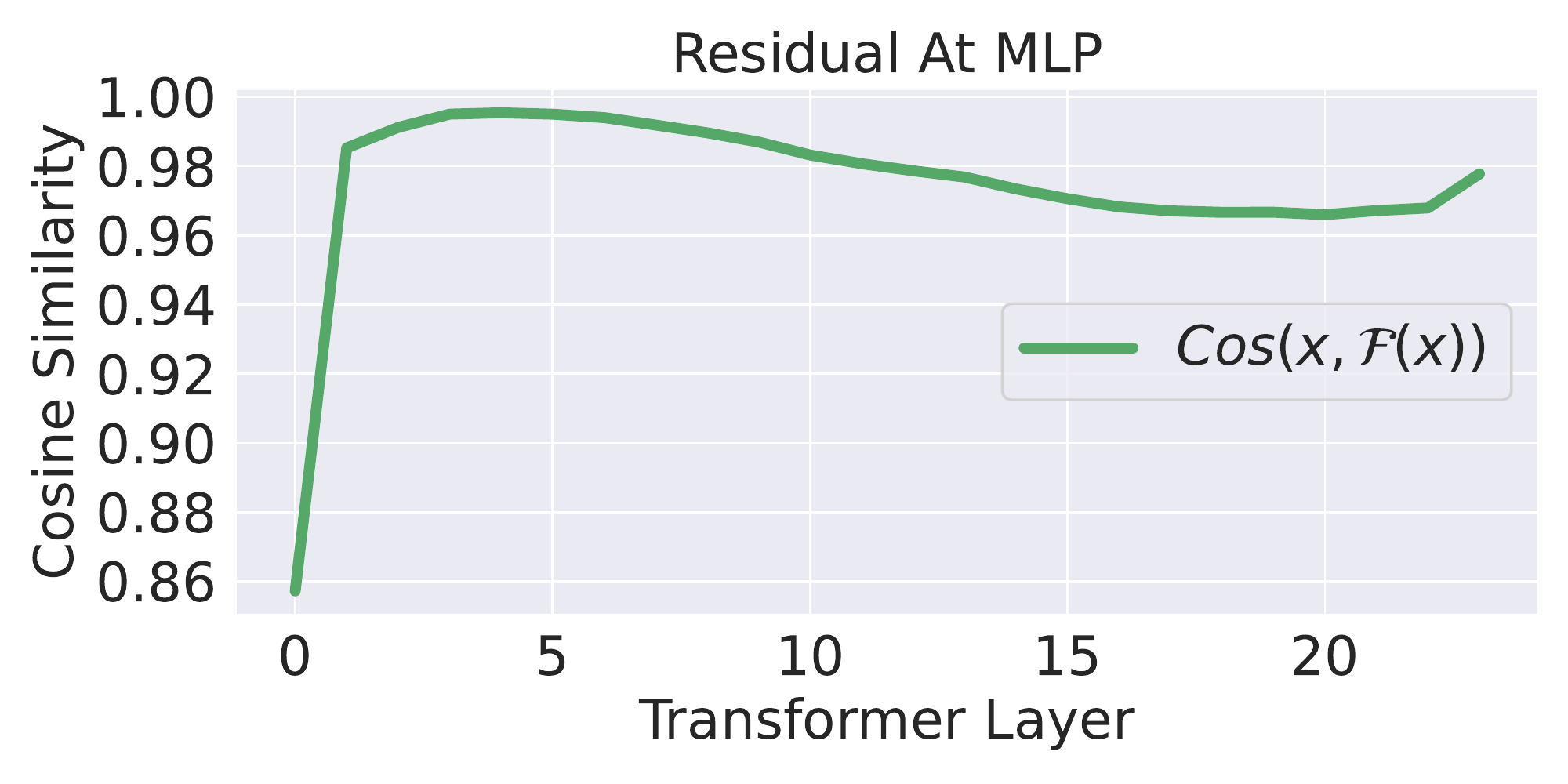}
    }
    \subfigure[Norm Comparision]{
    \includegraphics[width=0.6\textwidth]{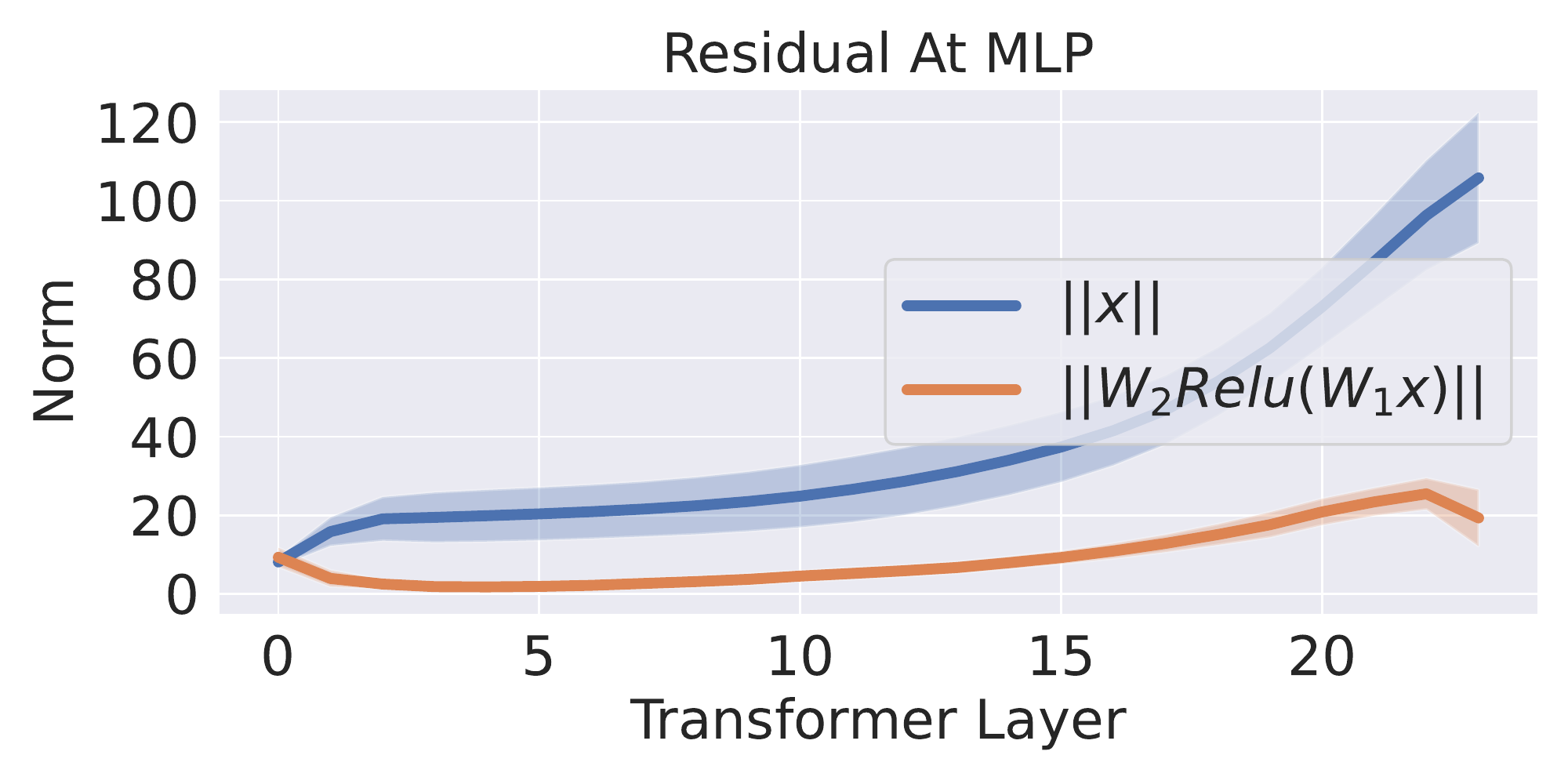}
    }
    \caption{ $x$ and $\mathcal{F}(x)$ is high in cosine similarity}
    \label{figure:appendix_mlp}
\end{figure*}

\newpage
\section{Proofs}
\label{theo:wmwm_proof}
\subsection{Proof of Theorem \ref{theo:what_matter_will_matter}}
We consider the token generation process of a simplified model: a single-layer transformer model with single-head attention.
\begin{equation}
    \label{eq:appendix_basic_transformer}
    x_{t+1} = \mathcal{F}\paren{a_{t}} \textrm{, where } a_{t} = \texttt{softmax}\paren{\sfrac{1}{t}\cdot x_tW_QW_K^\top X_{t-1}^\top}X_{t-1}W_VW_O
\end{equation}
$x_t\in\R^{1\times d}$ is a row vector. $X_{t-1}\in\R^{(t-1)\times d}$ denotes the aggregation of $x_1,\dots, x_{t-1}$, where the $j$th row is $x_j$. $W_Q, W_K, W_V\in\R^{d\times p}$ and $W_O\in\R^{p\times d}$ are the attention weights. Lastly, $\mathcal{F}:\R^{1\times d}\rightarrow\R^{1\times d}$ denotes the MLP block following attention block, a two-layer MLP with skip connections, given by
\begin{equation}
    \label{eq:appendix_basic_mlp}
    \mathcal{F}(x) = x + W_2\texttt{relu}(W_1x)
\end{equation}
We are interested in the attention scores $\alpha_{t} = \texttt{softmax}(\sfrac{1}{t}\cdot x_tW_QW_K^\top X_{t-1}^\top)$. Notice that $\alpha_{t,j}$ scales with $x_tW_QW_K^\top x_j^\top$. We first re-state the Theorem \ref{theo:what_matter_will_matter} below. 
\begin{theorem}
    Let $A = W_VW_OW_QW_K^\top$ and let $\lambda_K,\lambda_Q,\lambda_V,\lambda_O$ denote the largest singular values of $W_K, W_Q, W_V, W_O$, respectively. Consider the transformer in (\ref{eq:appendix_basic_transformer}) with normalized inputs $\norm{x_t}_2 = 1$ for all $t$. Let $c,\epsilon > 0$ be constants. Assume that $a_{t}x_{t+1}^\top \geq (1-\delta)\norm{a_{t}}_2$ with $\delta \leq \paren{\frac{c\epsilon}{\lambda_Q\lambda_K\lambda_V\lambda_O}}^2$. Then for all $x_{\ell}$ satisfying $x_{\ell}Ax_{\ell}^\top\geq c$ and $x_{\ell}Ax_{\ell}\geq \epsilon^{-1}\max_{j\in[t], j\neq \ell}x_jAx_{\ell}^\top$, it holds that
    \begin{equation}
        \frac{x_{\ell}Ax_{\ell}^\top}{\norm{a_{t}}_2}(\alpha_{t,\ell} - 3\epsilon) \leq x_{t+1}W_QW_K^\top x_j^\top\leq \frac{x_{\ell}Ax_{\ell}^\top}{\norm{a_{t}}_2}(\alpha_{t,\ell} + 3\epsilon)
    \end{equation}
\end{theorem}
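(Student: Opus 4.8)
The plan is to track how the MLP block $\mathcal{F}$ almost preserves the direction of $a_t$, and then to argue that $a_t$ itself is dominated by the single rank-one-like contribution coming from the pivotal token $x_\ell$. First I would expand $x_{t+1}W_QW_K^\top x_j^\top$. Since $x_{t+1} = \mathcal{F}(a_t) = a_t + W_2\texttt{relu}(W_1 a_t)$, the assumption $a_t x_{t+1}^\top \geq (1-\delta)\norm{a_t}_2$ says the cosine similarity between $a_t$ and $x_{t+1}$ is at least $1-\delta$, i.e. $\norm{x_{t+1} - a_t}_2$ (after suitable normalization) is small — concretely I expect to derive $\norm{x_{t+1}/\norm{x_{t+1}} - a_t/\norm{a_t}}_2 \leq \sqrt{2\delta}$ or similar. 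Combining this with the operator-norm bound $\norm{W_QW_K^\top}_2 \leq \lambda_Q\lambda_K$ and $\norm{x_j}_2 = 1$ gives
\[
    \left|x_{t+1}W_QW_K^\top x_j^\top - \frac{\norm{x_{t+1}}_2}{\norm{a_t}_2}\, a_t W_QW_K^\top x_j^\top\right| \leq \lambda_Q\lambda_K\norm{x_{t+1}}_2 \sqrt{2\delta},
\]
so up to a scalar factor the quantity of interest is governed by $a_t W_QW_K^\top x_j^\top$.

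Next I would unfold $a_t = \sum_{i} \alpha_{t,i} x_i W_VW_O$, so that $a_t W_QW_K^\top x_j^\top = \sum_i \alpha_{t,i}\, x_i W_VW_OW_QW_K^\top x_j^\top = \sum_i \alpha_{t,i}\, x_i A x_j^\top$. Wait — the statement is written with $x_j$ on the right, but the hypotheses only control $x_\ell A x_\ell^\top$ and $x_j A x_\ell^\top$; I would read the target (\ref{eq:theo3.1}) as the $j=\ell$ case (or symmetrize $A$), so the object is $\sum_i \alpha_{t,i}\, x_i A x_\ell^\top$. Now I split off the $i=\ell$ term: $\sum_i \alpha_{t,i}\,x_i A x_\ell^\top = \alpha_{t,\ell}\, x_\ell A x_\ell^\top + \sum_{i\neq\ell}\alpha_{t,i}\, x_i A x_\ell^\top$. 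The second hypothesis, $x_\ell A x_\ell^\top \geq \epsilon^{-1}\max_{j\neq\ell} x_j A x_\ell^\top$, bounds each off-diagonal term by $\epsilon\, x_\ell A x_\ell^\top$, and since $\sum_{i\neq\ell}\alpha_{t,i} \leq 1$ the whole remainder is at most $\epsilon\, x_\ell A x_\ell^\top$ in absolute value (a two-sided bound, using that the $x_j A x_\ell^\top$ could be negative as well, which still stays within $\pm\epsilon\,x_\ell A x_\ell^\top$ after possibly adjusting constants). Hence $a_t W_QW_K^\top x_\ell^\top = x_\ell A x_\ell^\top(\alpha_{t,\ell} \pm \epsilon)$.

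Then I would assemble the pieces. Dividing the displayed perturbation bound by $\norm{a_t}_2$ and using $\norm{x_{t+1}}_2 \leq \norm{a_t}_2 + \norm{W_2\texttt{relu}(W_1a_t)}_2$ together with the crude bound $\norm{a_t}_2 \leq \lambda_V\lambda_O$ (since $a_t$ is a convex combination of the $x_i W_VW_O$, each of norm at most $\lambda_V\lambda_O$), the error contributed by the $\mathcal{F}$-approximation becomes $\leq \lambda_Q\lambda_K\lambda_V\lambda_O\sqrt{2\delta}/\norm{a_t}_2 \cdot(\text{const})$; feeding in $\delta \leq (c\epsilon/(\lambda_Q\lambda_K\lambda_V\lambda_O))^2$ and $c \leq x_\ell A x_\ell^\top$ converts this into something like $\frac{x_\ell A x_\ell^\top}{\norm{a_t}_2}\cdot O(\epsilon)$. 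Collecting the $\alpha_{t,\ell}\pm\epsilon$ factor from the off-diagonal step and the $\pm O(\epsilon)$ from the $\mathcal{F}$-step and bookkeeping constants so that the total slack is $3\epsilon$, I arrive at (\ref{eq:theo3.1}). The main obstacle I anticipate is the constant-tracking in the last step: making sure the cosine-similarity gap $\delta$ translates cleanly into an additive $\epsilon$-error on the \emph{rescaled} inner product (rather than on the unnormalized one), which requires being careful about whether the factor $\norm{x_{t+1}}_2/\norm{a_t}_2$ is close to $1$ or merely bounded, and choosing the normalization convention for $x_{t+1}$ consistently with the $\norm{x_t}_2=1$ hypothesis. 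A secondary subtlety is justifying that the off-diagonal attention mass does not conspire with large negative values of $x_i A x_\ell^\top$ to violate the \emph{lower} bound — handled by the symmetric $\epsilon^{-1}$ domination hypothesis applied to $|x_j A x_\ell^\top|$.
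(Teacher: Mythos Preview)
Your proposal is correct and follows essentially the same route as the paper: the paper isolates the same two lemmas (a $\sqrt{2\delta}$ inner-product perturbation from the cosine-similarity hypothesis, and the $(\alpha_{t,\ell}\pm\epsilon)x_\ell A x_\ell^\top$ bound from expanding $a_t W_QW_K^\top x_\ell^\top$ and dominating the off-diagonal sum), then combines them via $\norm{a_t}_2\leq\lambda_V\lambda_O$ and $x_\ell A x_\ell^\top\geq c$ to reach the $3\epsilon$ slack. Your flagged normalization worry dissolves once you use that $\norm{x_{t+1}}_2=1$ by hypothesis, so the factor $\norm{x_{t+1}}_2/\norm{a_t}_2$ in your displayed bound is simply $1/\norm{a_t}_2$ and no further division is needed; the bound $\lambda_Q\lambda_K\sqrt{2\delta}$ is then controlled by $2c\epsilon/\norm{a_t}_2$ directly via $\norm{a_t}_2\leq\lambda_V\lambda_O$.
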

As a preparation of the proof, we first show two lemmas.
\begin{lemma}
\label{lem:inner_diff}
Let $x_1, x_2\in\R^{1\times m}$ satisfies $\norm{x_1}_2 = \norm{x_2}_2 = 1$ and $x_1x_2^\top\geq 1-\delta$ for some $\delta\in(0,1)$. Then for all $y\in\R^{1\times m}$ we have
\begin{align*}
    \left|x_1 y^\top - x_2 y^\top\right|\leq \sqrt{2\delta}\norm{y}_2
\end{align*}
\begin{proof}
Let $x_2 = x_2^\parallel + x_2^\perp$ where
\begin{align*}
    x_2^\parallel = x_1 x_2^\top\cdot x_1;\quad x_2^\perp = x_2 - x_2^\parallel
\end{align*}
Then it is easy to see that $x_2^\perp x_1^\top  = 0$. By the Pythagorean Theorem, we have
\begin{align*}
    \norm{x_2^\perp}_2^2 = \norm{x_2}_2^2 - \norm{x_2^\parallel}_2^2 = \delta(2-\delta)
\end{align*}
Therefore, we have
\begin{align*}
    \norm{x_1 - x_2}_2^2 & = \norm{(x_1 - x_2^\parallel) - x_2^\perp}_2^2\\
    & = \norm{\paren{1 - x_1 x_2^\top}x_1 - x_2^\perp}_2^2\\
    & = \paren{1 - x_1 x_2^\top}^2 + \norm{x_2^\perp}_2^2\\
    & = 2\delta
\end{align*}
Thus, the Cauchy-Schwarz inequality implies
\begin{align*}
    \left|x_1 y^\top - x_2 y^\top\right|\leq \norm{x_1 - x_2}_2\cdot\norm{y}_2 = \sqrt{2\delta}\norm{y}_2
\end{align*}
\end{proof}
\end{lemma}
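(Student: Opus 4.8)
The plan is to reduce the scalar bound $\left|x_1 y^\top - x_2 y^\top\right|$ to a bound on the Euclidean distance $\norm{x_1 - x_2}_2$, and then control that distance using only the two hypotheses (unit norms and near-unit inner product). First I would rewrite the left-hand side as $x_1 y^\top - x_2 y^\top = (x_1 - x_2)y^\top$ and apply the Cauchy--Schwarz inequality to obtain $\left|x_1 y^\top - x_2 y^\top\right| \leq \norm{x_1 - x_2}_2\,\norm{y}_2$. This step is the whole point: it removes $y$ from the problem entirely, leaving a quantity that depends only on $x_1$ and $x_2$.

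Next I would estimate $\norm{x_1 - x_2}_2$ by expanding the square directly, $\norm{x_1 - x_2}_2^2 = \norm{x_1}_2^2 + \norm{x_2}_2^2 - 2\,x_1 x_2^\top = 2 - 2\,x_1 x_2^\top$, using $\norm{x_1}_2 = \norm{x_2}_2 = 1$. The hypothesis $x_1 x_2^\top \geq 1 - \delta$ then gives $\norm{x_1 - x_2}_2^2 \leq 2 - 2(1-\delta) = 2\delta$, hence $\norm{x_1 - x_2}_2 \leq \sqrt{2\delta}$. Substituting this into the Cauchy--Schwarz estimate from the first step yields $\left|x_1 y^\top - x_2 y^\top\right| \leq \sqrt{2\delta}\,\norm{y}_2$, which is exactly the claim.

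There is essentially no obstacle: the argument is two lines once the ordering is respected (bound $\norm{x_1 - x_2}_2$ first, uniformly in $y$, then invoke Cauchy--Schwarz). If a more geometric presentation is preferred, an equivalent route is to split $x_2 = \paren{x_1 x_2^\top}x_1 + x_2^\perp$ into its components parallel and orthogonal to $x_1$, observe $\norm{x_2^\perp}_2^2 = 1 - \paren{x_1 x_2^\top}^2 = \delta(2-\delta)$ by the Pythagorean theorem, and compute $\norm{x_1 - x_2}_2^2 = \paren{1 - x_1 x_2^\top}^2 + \norm{x_2^\perp}_2^2 = 2\delta$ before applying Cauchy--Schwarz; the only thing to watch is that $\delta \in (0,1)$ keeps $\delta(2-\delta) \geq 0$ so the decomposition is well defined.
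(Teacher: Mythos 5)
Your proof is correct and follows the same two-step structure as the paper's: remove $y$ via Cauchy--Schwarz, $\left|(x_1-x_2)y^\top\right|\leq\norm{x_1-x_2}_2\norm{y}_2$, then bound $\norm{x_1-x_2}_2^2\leq 2\delta$ using unit norms and the inner-product hypothesis. Your direct expansion $\norm{x_1-x_2}_2^2 = 2 - 2\,x_1x_2^\top$ is a slight streamlining of the paper's orthogonal-decomposition route (which you also reproduce), and it more cleanly tracks the inequality direction: the paper writes $\norm{x_2^\perp}_2^2 = \delta(2-\delta)$ and $\norm{x_1-x_2}_2^2 = 2\delta$ as equalities, which implicitly treats $x_1x_2^\top\geq 1-\delta$ as an equality, whereas your version correctly carries the $\leq$.
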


\begin{lemma}
    \label{lem:sum_approx}
    Let $\ell\in[t]$ be given. Suppose that $x_{\ell} Ax_{\ell}^\top > \epsilon^{-1}\left|x_jAx_{\ell}^\top\right|$ for all $j\neq \ell$. Then we have
    \begin{align*}
        \paren{\mathcal{S}(t)_{\ell} - \epsilon}x_{\ell}^\top ax_{\ell}\leq x_{\ell}^\top W_K^\top W_Qa_{t} \leq \paren{\mathcal{S}(t)_{\ell} + \epsilon}x_{\ell}^\top ax_{\ell}
    \end{align*}
    \begin{proof}
        Notice that
        \begin{align*}
            a_{t} = \alpha_tX_{t-1}W_VW_O = \paren{\sum_{j=1}^{t-1}\alpha_{t,j}x_j}W_VW_O
        \end{align*}
        Thus, we have
        \begin{align*}
            a_{t}W_Q W_K^\top x_{\ell}^\top = \paren{\sum_{j=1}^{t-1}\alpha_{t,j}x_j}W_VW_O  W_QW_K^\top x_{\ell}^\top  = \sum_{j=1}^{t-1}\alpha_{t,j}x_j Ax_{\ell}^\top
        \end{align*}
        Therefore
        \begin{align*}
            \left|a_{t}W_Q W_K^\top x_{\ell}^\top - \alpha_{t,\ell}x_{\ell}Ax_{\ell}^\top\right| & = \left|\sum_{j=1,j\neq\ell}^{t-1}\alpha_{t,j}x_j Ax_{\ell}^\top\right|\\
            & \leq \sum_{j=1,j\neq\ell}^{t-1}\alpha_{t,j}\left|x_j Ax_{\ell}^\top\right|\\
            & \leq \epsilon x_{\ell}Ax_{\ell}^\top\sum_{j=1,j\neq\ell}^{t-1}\alpha_{t,j}\\
            & \leq \epsilon x_{\ell}Ax_{\ell}^\top
        \end{align*}
        where in the second inequality we use $\epsilon^{-1}\left|x_j Ax_{\ell}^\top\right| \leq  x_{\ell}Ax_{\ell}^\top$ and in the third inequality we use $\sum_{j=1,j\neq\ell}^{t-1}\alpha_{t,j}\leq \sum_{j=1}^{t-1}\alpha_{t,j} = 1$. This implies that
        \begin{align*}
            \paren{\alpha_{t,\ell} - \epsilon}x_{\ell}Ax_{\ell}^\top\leq a_{t}W_Q W_K^\top x_{\ell}^\top \leq \paren{\alpha_{t,\ell} + \epsilon}x_{\ell}Ax_{\ell}^\top
        \end{align*}
    \end{proof}
\end{lemma}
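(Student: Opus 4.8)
\textbf{Proof proposal for Lemma \ref{lem:sum_approx}.}

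The plan is to expand $a_t W_Q W_K^\top x_\ell^\top$ explicitly in terms of the attention scores $\alpha_{t,j}$ and the bilinear form $x_j A x_\ell^\top$, then isolate the $j=\ell$ term and control the remainder using the separation hypothesis. First I would use the definition $a_t = \alpha_t X_{t-1} W_V W_O$, i.e.\ $a_t = \paren{\sum_{j=1}^{t-1}\alpha_{t,j} x_j} W_V W_O$, so that right-multiplying by $W_Q W_K^\top x_\ell^\top$ and recalling $A = W_V W_O W_Q W_K^\top$ gives the clean identity $a_t W_Q W_K^\top x_\ell^\top = \sum_{j=1}^{t-1}\alpha_{t,j}\, x_j A x_\ell^\top$. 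This is just linearity of the sum and the definition of $A$; no estimates are needed yet.

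Next I would split off the diagonal term: $a_t W_Q W_K^\top x_\ell^\top - \alpha_{t,\ell}\, x_\ell A x_\ell^\top = \sum_{j\neq\ell}\alpha_{t,j}\, x_j A x_\ell^\top$. Then I bound the absolute value of the right-hand side by the triangle inequality, $\left|\sum_{j\neq\ell}\alpha_{t,j} x_j A x_\ell^\top\right| \leq \sum_{j\neq\ell}\alpha_{t,j}\left|x_j A x_\ell^\top\right|$, apply the hypothesis $\left|x_j A x_\ell^\top\right|\leq \epsilon\, x_\ell A x_\ell^\top$ for each $j\neq\ell$ to pull out the common factor $\epsilon\, x_\ell A x_\ell^\top$, and finally use $\sum_{j\neq\ell}\alpha_{t,j}\leq\sum_{j=1}^{t-1}\alpha_{t,j}=1$ since $\alpha_t$ is a softmax output. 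This chain gives $\left|a_t W_Q W_K^\top x_\ell^\top - \alpha_{t,\ell}\, x_\ell A x_\ell^\top\right|\leq \epsilon\, x_\ell A x_\ell^\top$, which rearranges to the two-sided bound $\paren{\alpha_{t,\ell}-\epsilon}x_\ell A x_\ell^\top \leq a_t W_Q W_K^\top x_\ell^\top \leq \paren{\alpha_{t,\ell}+\epsilon}x_\ell A x_\ell^\top$.

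I do not anticipate a genuine obstacle here — the argument is a one-line algebraic identity followed by a triangle-inequality estimate. The only point requiring mild care is matching the notation of the statement to what the identity produces: the lemma is phrased with $x_\ell^\top W_K^\top W_Q a_t$ and $x_\ell^\top a x_\ell$ (and $\mathcal{S}(t)_\ell$), which should be read as the scalar $a_t W_Q W_K^\top x_\ell^\top$, the scalar $x_\ell A x_\ell^\top$, and the $\ell$-th softmax coordinate $\alpha_{t,\ell}$ respectively; since all quantities involved are scalars, transposes are cosmetic and the identification is immediate. One should also note the hypothesis is stated as $x_\ell A x_\ell^\top > \epsilon^{-1}\left|x_j A x_\ell^\top\right|$, which is exactly $\left|x_j A x_\ell^\top\right| < \epsilon\, x_\ell A x_\ell^\top$, so it plugs directly into the second inequality of the chain. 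No further lemmas (in particular not Lemma \ref{lem:inner_diff}) are needed for this statement.
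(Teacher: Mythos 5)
Your proof is correct and follows essentially the same approach as the paper: expand $a_t W_Q W_K^\top x_\ell^\top = \sum_{j}\alpha_{t,j}x_j A x_\ell^\top$, isolate the $j=\ell$ term, and bound the remainder via the triangle inequality, the hypothesis $|x_j A x_\ell^\top|\leq\epsilon\, x_\ell A x_\ell^\top$, and $\sum_{j\neq\ell}\alpha_{t,j}\leq 1$. Your observation about the notational mismatch in the lemma statement ($\mathcal{S}(t)_\ell$ for $\alpha_{t,\ell}$, $x_\ell^\top a x_\ell$ for $x_\ell A x_\ell^\top$, and the transposed scalar) is also accurate and matches what the paper's own proof implicitly resolves.
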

Now we proceed to the main body of the proof. Assume that $\norm{x_{\ell}}_2 = 1$ for all $\ell$. Using Lemma (\ref{lem:inner_diff}), if $a_{t}x_{t+1}^\top\geq (1-\delta)\norm{a_{t}}_2$, then we have
\begin{align*}
    \left|\norm{a_{t}}_2^{-1}a_{t} W_QW_K^\top x_{\ell}^\top  - x_{t+1} W_QW_K^\top x_{\ell}^\top\right| \leq \sqrt{2\delta}\norm{W_Q W_K^\top x_{\ell}^\top}_2
\end{align*}
Recall that $\lambda_Q, \lambda_K$ are the maximum singular values of $W_Q$ and $W_K$, respectively. Then it holds that $\norm{W_Q W_K^\top x_{\ell}^\top}_2 \leq \lambda_Q\lambda_K\norm{x_{\ell}}_2$. Using $\norm{x_{\ell}}_2 = 1$, we have
\begin{align*}
    \left|\norm{a_{t}}_2^{-1}a_{t} W_QW_K^\top x_{\ell}^\top  - x_{t+1} W_QW_K^\top x_{\ell}^\top\right| \leq \sqrt{2\delta}\lambda_Q\lambda_K
\end{align*}
Notice that
\begin{align*}
    \norm{a_{t}}_2 & = \norm{\paren{\sum_{j=1}^{t-1}\alpha_{t,j}x_j}W_VW_O}\\
    & \leq \lambda_O\lambda_V\norm{\sum_{j=1}^{t-1}\alpha_{t,j}x_j}_2\\
    & \leq \lambda_O\lambda_V\sum_{j=1}^{t-1}\alpha_{t,j}\norm{x_j}_2\\
    & = \lambda_O\lambda_V
\end{align*}
Then since $\delta\leq \paren{\frac{c\epsilon}{\lambda_Q\lambda_K\lambda_V\lambda_O}}^2$, we have
\begin{align*}
    \left|\norm{a_{t}}_2^{-1}a_{t} W_QW_K^\top x_{\ell}^\top  - x_{t+1} W_QW_K^\top x_{\ell}^\top\right| \leq \frac{2c\epsilon}{\lambda_V\lambda_O} \leq \frac{2c\epsilon}{\norm{a_{t}}_2}
\end{align*}
Since by Lemma (\ref{lem:sum_approx}), we have
\begin{align*}
    \left|a_{t} W_QW_K^\top x_{\ell}^\top - \alpha_{t,\ell}x_{\ell}Ax_{\ell}^\top\right| \leq \epsilon x_{\ell}^\top ax_{\ell}
\end{align*}
It must hold that
\begin{align*}
    \left|x_{t+1} W_QW_K^\top x_{\ell}^\top - \norm{a_{t+1}}_2^{-1} \alpha_{t,\ell}x_{\ell}Ax_{\ell}^\top\right| \leq \frac{\epsilon}{\norm{a_{t}}_2}x_{\ell}^\top ax_{\ell} + \frac{2c\epsilon}{\norm{a_{t}}_2}
\end{align*}
Since $x_{\ell}^\top ax_{\ell} \geq c$, it holds that
\begin{align*}
    \frac{2c\epsilon}{\norm{a_{t}}_2} \leq \frac{2\epsilon}{\norm{a_{t}}_2}x_{\ell}^\top ax_{\ell}
\end{align*}
which implies that
\begin{align*}
    \left|x_{t+1} W_QW_K^\top x_{\ell}^\top - \norm{a_{t}}_2^{-1} \alpha_{t,\ell}x_{\ell}Ax_{\ell}^\top\right| \leq \frac{3\epsilon}{\norm{a_{t}}_2}x_{\ell}^\top ax_{\ell}
\end{align*}
Therefore
\begin{align*}
    \frac{x_{\ell}Ax_{\ell}^\top}{\norm{a_{t}}_2}(\alpha_{t,\ell} - 3\epsilon) \leq x_{t+1} W_QW_K^\top x_{\ell}^\top \leq \frac{x_{\ell}Ax_{\ell}^\top}{\norm{a_{t}}_2}(\alpha_{t,\ell} + 3\epsilon)
\end{align*}

\subsection{Proof of Theorem \ref{theo:error_bound_main}}
Let $\{\tilde{x}_t\}_{t=0}^T$ denote the tokens generated by the transformer with budget KV cache as in Algorithm \ref{alg:sampling_pivotal_token} with $m = 1$:
\[
    \tilde{x}_{t+1} = \mathcal{F}\paren{\tilde{a}_{t}} \textrm{, where } \tilde{a}_{t} = \softmax\paren{\sfrac{1}{t}\cdot\tilde{x}_t W_Q\tilde{\mathcal{K}}_t^\top}\tilde{\mathcal{V}}_t^\top W_O
\]
Notice that when $m = 1$, i.e., in each iteration, we drop one token with the lowest score, the cache will always maintain $B$ tokens. If the ranking of the attention scores does not change in each iteration, Algorithm \ref{alg:sampling_pivotal_token} will always drop tokens with the smallest attention scores.


For reference purposes, let $\{x_t\}_{t=0}^T$ denote the tokens generated by a vanilla transformer defined in (\ref{eq:appendix_basic_transformer}). We re-state Theorem \ref{theo:error_bound_main} below, which bounds the difference $\norm{x_t - \tilde{x}_t}_2$. 

\begin{theorem}
    \label{theo:appendix_err_bound}
    Let $\lambda_1,\lambda_2$ denote the largest singular values of $W_1$ and $W_2$ in (\ref{eq:appendix_basic_mlp}). Let
    \[
        \beta_{t,j} = \frac{\exp\paren{\sfrac{1}{t}\cdot\tilde{x}_tW_QW_K^\top\tilde{x}_j^\top}}{\sum_{i=1}^{t-1}\exp\paren{\sfrac{1}{t}\cdot\tilde{x}_tW_QW_K^\top\tilde{x}_i^\top}}
    \]
    and assume that each $\beta_{t,j} = cv_{t,j}$, where $v_{t,j}$ are sampled from a power-law distribution with pdf $f(x) = c(x+b)^{-k}$. Suppose that $\lambda_V\lambda_O(1+ \lambda_1\lambda_2)(1 + \lambda_Q\lambda_K) \leq \frac{1}{2}$. Let $T_{\min}$ and $T_{\max}$ denote the starting and maximum sequence lengths, respectively, and let $B \leq T_{\max}$ denote the budget as in Algorithm \ref{alg:sampling_pivotal_token}. If for all $t\in[T_{\min}, T_{\max}]$, $S_t$ contains only tokes with at most the largest $B$ values of $\beta_{t,j}$, that is, $\left|S_t\right| = B$ and $\min_{j\in S_t}\beta_{t,j} \geq \max_{j\notin\hat{S}_t}\beta_{t,j}$, then for all $\epsilon\in(0,1)$, with probability at least $1 - T_{\max}\exp\paren{-\frac{\epsilon^2b^2(T_{\min}-1)}{(k-2)^2(u-b)^2}} - T_{\max}\exp\paren{-\frac{2(T_{\min}-1)(1-\sfrac{B}{T_{\max}})^2}{(1-\epsilon)^2}}$, the following error bound must hold for all $t\in [T_{\min}, T_{\max}]$
    \[
        \E{\norm{x_t - \tilde{x}_t}_2} \leq \frac{2.1(1-\sfrac{B}{T_{\max}})}{(1-\epsilon)^2}\paren{k -  (k-1)\paren{\frac{1-\epsilon}{\sfrac{B}{T_{\max}}-\epsilon}}^{\sfrac{1}{(k-1)}}}
    \]
\end{theorem}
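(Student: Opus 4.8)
The plan is to derive a contraction-type recursion for $e_t := \E{\norm{x_t - \tilde{x}_t}_2}$, unroll it, and bound the per-step error caused by evicting a token with a separate high-probability argument. The first move is to insert the \emph{full} attention output evaluated on the compressed-run tokens,
\[
\hat{a}_t = \softmax\paren{\tfrac{1}{t}\,\tilde{x}_tW_QW_K^\top \tilde{X}_{t-1}^\top}\tilde{X}_{t-1}W_VW_O,
\]
so that $\hat{a}_t$ and $\tilde{a}_t$ share the same query/key/value vectors and differ only through renormalizing the softmax over the surviving set $S_t$, whereas $a_t$ and $\hat{a}_t$ are honest attention outputs on two different token streams. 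Since $\mathcal{F}$ in (\ref{eq:appendix_basic_mlp}) is $(1+\lambda_1\lambda_2)$-Lipschitz (relu is $1$-Lipschitz, plus the skip connection), I would split $\norm{x_{t+1}-\tilde{x}_{t+1}}_2 \le (1+\lambda_1\lambda_2)\paren{\norm{a_t-\hat{a}_t}_2 + \norm{\hat{a}_t-\tilde{a}_t}_2}$. For the propagation term I would write $a_t-\hat{a}_t = \big(\sum_j(\alpha_{t,j}-\hat{\alpha}_{t,j})x_j\big)W_VW_O + \big(\sum_j\hat{\alpha}_{t,j}(x_j-\tilde{x}_j)\big)W_VW_O$: the second sum is a convex combination (the softmax weights form a probability vector), hence $\le\lambda_V\lambda_O\max_{j<t}\norm{x_j-\tilde{x}_j}_2$; the first sum is controlled by Lipschitzness of the map from the stacked key matrix and query to the weight vector $\hat{\alpha}_t$, which brings in a factor $\lambda_Q\lambda_K$ (and $\tfrac{1}{t}\le 1$) and again $\max_{j\le t}\norm{x_j-\tilde{x}_j}_2$. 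Collecting, $\norm{a_t-\hat{a}_t}_2 \le \lambda_V\lambda_O(1+\lambda_Q\lambda_K)\max_{j\le t}\norm{x_j-\tilde{x}_j}_2$, so by the hypothesis $\lambda_V\lambda_O(1+\lambda_1\lambda_2)(1+\lambda_Q\lambda_K)\le\tfrac12$ the propagation part of the recursion has coefficient at most $\tfrac12$.

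For the eviction term, $\tilde{a}_t$ is obtained from $\hat{a}_t$ by deleting the softmax mass $\gamma_t := \sum_{j\notin S_t}\beta_{t,j}$ and rescaling the survivors by $(1-\gamma_t)^{-1}$; a one-line computation gives $\norm{\hat{a}_t-\tilde{a}_t}_2 \le 2\gamma_t\,\lambda_V\lambda_O\max_j\norm{\tilde{x}_j}_2$. The same contraction hypothesis makes the one-step map a $\tfrac12$-contraction of norms, so every iterate stays in a fixed $\ell_2$-ball of radius $R$ set by the prompt, giving $\max_j\norm{\tilde{x}_j}_2\le R$. Taking expectations yields $e_{t+1}\le\tfrac12\max_{s\le t}e_s + C\,\E{\gamma_t}$ with $C = 2R\lambda_V\lambda_O(1+\lambda_1\lambda_2)$. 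Since $e_{T_{\min}} = 0$ (the cache is not yet full), an induction on $t$ with invariant $\max_{s\le t}e_s\le 2C\max_{T_{\min}\le s\le T_{\max}}\E{\gamma_s}$ closes the recursion and reduces everything to a bound on $\gamma_t$.

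The probabilistic heart of the argument is the bound on $\gamma_t$, where the power-law model and the two exponential terms enter. Write $\gamma_t = c\sum_{j\notin S_t}v_{t,j}$ with $c = \big(\sum_{j<t}v_{t,j}\big)^{-1}$, and work conditionally on $\tilde{x}_1,\dots,\tilde{x}_{t-1}$ so that the $v_{t,j}$ are i.i.d.\ from the power law with support width $u-b$ and mean $\E{v} = b/(k-2)$. For the denominator, Hoeffding gives $\tfrac{1}{t-1}\sum_{j<t}v_{t,j}\ge(1-\epsilon)\tfrac{b}{k-2}$ except on an event of probability at most $\exp(-\epsilon^2b^2(T_{\min}-1)/((k-2)^2(u-b)^2))$. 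For the numerator, the evicted indices are the $t-1-B$ smallest $v_{t,j}$, and Hoeffding applied to the count of samples below a threshold shows that, except on an event of probability at most $\exp(-2(T_{\min}-1)(1-B/T_{\max})^2/(1-\epsilon)^2)$, every evicted value lies below the population quantile $q$ with $1-F(q) = (B/T_{\max}-\epsilon)/(1-\epsilon)$; inverting $F(x) = 1-(b/(x+b))^{k-1}$ gives $q+b = b\big(\tfrac{1-\epsilon}{B/T_{\max}-\epsilon}\big)^{1/(k-1)}$ and hence $\sum_{j\notin S_t}v_{t,j}\le(t-1)\int_0^q xf(x)\,dx$. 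Dividing the closed-form truncated first moment of the power law by the concentrated denominator, and simplifying with elementary inequalities (absorbing the $R,\lambda$ products, which the contraction hypothesis bounds by an absolute constant, into the numerical factor $2.1$), produces the displayed bound; the worst case over $t$ is $t = T_{\max}$, and a union bound over the $\le T_{\max}$ generation steps gives the $T_{\max}$ prefactors in the failure probability.

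The main obstacle I anticipate is the per-step propagation bound $\norm{a_t-\hat{a}_t}_2$: attention at step $t$ is a softmax over all $t-1$ earlier tokens, and one must show the error is not amplified by a factor growing with $t$. The saving grace is exactly that the softmax outputs a probability vector, so value aggregation is an averaging (non-expanding) operation in $\ell_2$; making this quantitative with the correct $\lambda_Q\lambda_K$ dependence, uniformly in sequence length, is the delicate part. The rest — the geometric unrolling and the two concentration arguments plus a power-law integral — is comparatively mechanical, the only fiddly step being the final algebraic reduction of the truncated moment to the stated closed form.
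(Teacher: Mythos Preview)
Your proposal is essentially the paper's proof: the same three-way split (your $\hat a_t$ is the paper's intermediate $b_t$), the same MLP and softmax Lipschitz lemmas, the same eviction bound $\|\hat a_t-\tilde a_t\|_2\lesssim\lambda_V\lambda_O\sum_{j\notin S_t}\beta_{t,j}$, the same $\tfrac12$-contraction induction on $\max_{s\le t}\E{\|x_s-\tilde x_s\|_2}$, and the same two Hoeffding applications plus a truncated power-law moment for the probabilistic step. The only structural difference is that the paper simply \emph{assumes} normalized tokens $\|x_t\|_2=1$ throughout, whereas you argue for a bounded radius via the same singular-value hypothesis; your contraction-of-norms observation is correct and gives the same effect.

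One correction to the concern you flag at the end: the $t$-uniformity in the weight-perturbation term does \emph{not} come from softmax outputting a probability vector --- that property handles only your ``second sum''. For your ``first sum'' $\sum_j(\alpha_{t,j}-\hat\alpha_{t,j})x_j$, the paper passes through $\|\alpha_t-\hat\alpha_t\|_1\le\sqrt{t-1}\,\|\alpha_t-\hat\alpha_t\|_2$, then uses $1$-Lipschitzness of softmax on the \emph{scaled} logits; the scaling is exactly the $\tfrac{1}{t}$ you dismissed, and the pre-softmax perturbation picks up a second $\sqrt{t-1}$ from the key-matrix norm, so the net factor is $2\tfrac{t-1}{t}\le 2$. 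In other words, the $\tfrac{1}{t}$ in the attention definition is precisely what kills the sequence-length growth; replacing it by ``$\tfrac{1}{t}\le 1$'' would leave you with a bound that blows up linearly in $t$.
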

Define $m_{k,j} = \mathbb{I}\left\{j\in S_t\right\}$.
With the definition of $m_{k,j}$, $\tilde{a}_{t}$ can be written as
\begin{equation}
    \label{eq:sampled_a}
    \tilde{a}_{t} = \paren{\sum_{j=1}^{t-1}\tilde{\alpha}_{t,j}\tilde{x}_j}W_VW_O;\quad \tilde{\alpha}_{t,j} = \frac{m_{k,j}\exp\paren{\sfrac{1}{t}\cdot \tilde{x}_tW_QW_K^\top \tilde{x}_j^\top}}{\sum_{i=1}^{t-1}m_{k,j}\exp\paren{\sfrac{1}{t}\cdot \tilde{x}_tW_QW_K^\top \tilde{x}_i^\top}}
\end{equation}
Our first lemma shows the Lipschitzness of the attention module.
\begin{lemma}
    \label{lem:attention_lip}
    Consider two sequences of tokens $\{x_i\}_{i=1}^t$ and $\{y_i\}_{i=1}^t$ where $\norm{x_i}_2 = \norm{y_i}_2 = 1$ for all $i\in[t]$. Define $X_{t-1}, Y_{t-1}\in\R^{(t-1)\times d}$ as the matrices whose $i$th row are $x_i$ and $y_i$, respectively. Let $\Delta_t = \norm{x_t - y_t}_2$. Then we have
    \[
        \norm{\softmax\paren{\frac{1}{t}x_tW_QW_K^\top X_{t-1}^\top}_2 - \softmax\paren{\frac{1}{t}y_tW_QW_K^\top Y_{t-1}^\top}}_2\leq 2\frac{\sqrt{t-1}}{t}\lambda_Q\lambda_K\Delta_t
    \]
\end{lemma}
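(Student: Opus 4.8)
The plan is to bound the difference between two softmax outputs by controlling the difference in their logit vectors and invoking the Lipschitz continuity of the softmax map. First I would write the two logit (row) vectors as $u = \frac{1}{t} x_t W_Q W_K^\top X_{t-1}^\top$ and $v = \frac{1}{t} y_t W_Q W_K^\top Y_{t-1}^\top$, both in $\R^{1\times(t-1)}$, and recall the standard fact that $\softmax$ is $1$-Lipschitz with respect to the $\ell_2$ norm (this follows because its Jacobian is $\mathrm{diag}(p) - p p^\top$, which is symmetric positive semidefinite with spectral norm at most $1$). Hence $\norm{\softmax(u) - \softmax(v)}_2 \leq \norm{u - v}_2$, and the whole problem reduces to bounding $\norm{u-v}_2$.

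Next I would split $u - v$ into two pieces by a telescoping argument: $\frac{1}{t}\paren{x_t - y_t} W_Q W_K^\top X_{t-1}^\top$ plus $\frac{1}{t} y_t W_Q W_K^\top (X_{t-1} - Y_{t-1})^\top$. For the first piece, its $\ell_2$ norm is the norm of a vector whose $j$th entry is $\frac{1}{t}(x_t-y_t)W_QW_K^\top x_j^\top$; using $\norm{W_Q W_K^\top x_j^\top}_2 \leq \lambda_Q\lambda_K\norm{x_j}_2 = \lambda_Q\lambda_K$ and Cauchy–Schwarz, each entry is at most $\frac{1}{t}\lambda_Q\lambda_K\Delta_t$, so summing $t-1$ squared entries gives a bound of $\frac{\sqrt{t-1}}{t}\lambda_Q\lambda_K\Delta_t$. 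For the second piece, the $j$th entry is $\frac{1}{t} y_t W_Q W_K^\top (x_j - y_j)^\top$; since $\norm{y_t W_Q W_K^\top}_2 \leq \lambda_Q\lambda_K$ (as $\norm{y_t}_2=1$) and each $\norm{x_j - y_j}_2 \leq \Delta_j$, but the lemma statement only carries $\Delta_t$ — so here I would need to control $\Delta_j$ for $j<t$. Looking again at the statement, it asserts a bound purely in terms of $\Delta_t = \norm{x_t - y_t}_2$, which suggests the intended application has $x_j = y_j$ for all $j < t$ (i.e. the two sequences share history and differ only in the current token), making the second piece vanish. Under that reading the second term is zero and only the first contributes, giving exactly $\frac{\sqrt{t-1}}{t}\lambda_Q\lambda_K\Delta_t$; the extra factor of $2$ in the statement is then slack, or accounts for a less tight version of the softmax-Lipschitz step (e.g. bounding each of the two halves of a symmetrized difference separately).

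The main obstacle I anticipate is precisely this bookkeeping about whether the histories $\{x_i\}_{i<t}$ and $\{y_i\}_{i<t}$ are assumed equal: if they are not, a clean bound in $\Delta_t$ alone is impossible and one needs a recursion over all $\Delta_j$, so I would first pin down from the downstream usage (the proof of Theorem~\ref{theo:error_bound_main}) that the lemma is invoked with a common prefix. A secondary technical point is justifying the $1$-Lipschitz constant of softmax cleanly; I would either cite it or give the two-line Jacobian argument. Once those are settled, the remaining steps are routine operator-norm and Cauchy–Schwarz estimates, and collecting the factor $\frac{1}{t}$ against the $\sqrt{t-1}$ coming from the $(t-1)$-dimensional logit vector yields the claimed $2\frac{\sqrt{t-1}}{t}\lambda_Q\lambda_K\Delta_t$.
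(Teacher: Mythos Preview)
Your approach matches the paper's: telescope the logit difference into a ``query change'' term and a ``key-matrix change'' term, then apply the $1$-Lipschitzness of softmax. The paper happens to insert the intermediate point \emph{after} the softmax (i.e.\ it splits via $\softmax(\tfrac{1}{t}x_tW_QW_K^\top Y_{t-1}^\top)$) rather than before, but that is cosmetic.

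The genuine gap is in how you resolve the second term. You conjecture that the downstream application has $x_j = y_j$ for all $j<t$, so that the key-matrix term vanishes and the factor of $2$ is slack. This is not what happens: in the proof of Theorem~\ref{theo:error_bound_main} the lemma is invoked with the vanilla sequence $\{x_j\}$ versus the compressed sequence $\{\tilde{x}_j\}$, which differ at \emph{every} index, and the $2$ is not slack but the honest sum of two nonzero contributions. The paper's implicit reading of $\Delta_t$ is as a uniform bound $\max_{j\leq t}\norm{x_j - y_j}_2$, not merely $\norm{x_t - y_t}_2$; under that reading $\norm{X_{t-1}-Y_{t-1}}_F \leq \sqrt{t-1}\,\Delta_t$, the second term contributes another $\tfrac{\sqrt{t-1}}{t}\lambda_Q\lambda_K\Delta_t$, and the stated bound follows. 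Your argument goes through once you replace ``common prefix'' with ``$\Delta_t$ uniformly bounds all coordinate differences,'' which is exactly the hypothesis the induction in Theorem~\ref{theo:error_bound_main} supplies.
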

\begin{proof}
    We can decompose the difference as
    \begin{align*}
        & \norm{\softmax\paren{\frac{1}{t}x_tW_QW_K^\top X_{t-1}^\top} - \softmax\paren{\frac{1}{t}y_tW_QW_K^\top Y_{t-1}^\top}}_2\\
        & \quad\quad \leq \norm{\softmax\paren{\frac{1}{t}x_tW_QW_K^\top X_{t-1}^\top} - \softmax\paren{\frac{1}{t}x_tW_QW_K^\top Y_{t-1}^\top}}_2\\
        &\quad\quad\quad\quad+\norm{\softmax\paren{\frac{1}{t}x_tW_QW_K^\top Y_{t-1}^\top} - \softmax\paren{\frac{1}{t}y_tW_QW_K^\top Y_{t-1}^\top}}_2
    \end{align*}
    By the Lipschitzness of softmax, we have
    \begin{align*}
        & \norm{\softmax\paren{\frac{1}{t}x_tW_QW_K^\top X_{t-1}^\top} - \softmax\paren{\frac{1}{t}x_tW_QW_K^\top Y_{t-1}^\top}}_2\\
        & \quad\quad \leq \frac{1}{t}\norm{x_tW_QW_K^\top\paren{X_{t-1}- Y_{t-1}}^\top}_2\\
        & \quad\quad \leq \frac{1}{t}\lambda_Q\lambda_K\norm{x_t}_2\norm{X_{t-1}- Y_{t-1}}_2
    \end{align*}
    Since $\norm{x_t}_2 = 1$ and $\norm{X_{t-1}- Y_{t-1}}_2 = \paren{\sum_{j=1}^{t-1}\norm{x_j - y_j}_2}^{\frac{1}{2}} \leq \sqrt{t-1}\Delta_t$, we have
    \[
        \norm{\softmax\paren{x_tW_QW_K^\top X_{t-1}^\top} - \softmax\paren{x_tW_QW_K^\top Y_{t-1}^\top}}_2 \leq \frac{\sqrt{t-1}}{t}\lambda_Q\lambda_K\Delta_t
    \]
    Similarly,
    \begin{align*}
        & \norm{\softmax\paren{\frac{1}{t}x_tW_QW_K^\top Y_{t-1}^\top} - \softmax\paren{\frac{1}{t}y_tW_QW_K^\top Y_{t-1}^\top}}_2\\
        & \quad\quad \leq \frac{1}{t}\norm{(x_t - y_t)W_QW_K^\top Y_{t-1}^\top}_2\\
        & \quad\quad \leq \frac{1}{t}\lambda_Q\lambda_K\norm{Y_{t-1}}_F\norm{x_t - y_t}_2
    \end{align*}
    Since $\norm{x_t - y_t}_2 = \Delta_t$ and $\norm{Y_{t-1}}_2 = \sqrt{t-1}$, we have
    \[
        \norm{\softmax\paren{\frac{1}{t}x_tW_QW_K^\top Y_{t-1}^\top} - \softmax\paren{\frac{1}{t}y_tW_QW_K^\top Y_{t-1}^\top}}_2 \leq \frac{\sqrt{t-1}}{t}\lambda_Q\lambda_K\Delta_t
    \]
    Combining the two bounds gives
    \[
        \norm{\softmax\paren{\frac{1}{\sqrt{t}}x_tW_QW_K^\top X_{t-1}^\top} - \softmax\paren{\frac{1}{\sqrt{t}}y_tW_QW_K^\top Y_{t-1}^\top}}_2 \leq 2\frac{\sqrt{t-1}}{t}\lambda_Q\lambda_K\Delta_t
    \]
\end{proof}
Our second lemma shows the difference between the output of the sampled and vanilla transformer when the input is the same.
\begin{lemma}
    \label{lem:intermediate_bound}
    Let $\tilde{a}_{t}$ be defined as in (\ref{eq:sampled_a}). Define $b_{t}$ as
    \begin{equation}
        \label{eq:intermediate_a}
        b_{t} = \paren{\sum_{j=1}^{t-1}\beta_{t,j}\tilde{x}_j}W_VW_O;\quad\beta_{t,j} = \frac{\exp\paren{\sfrac{1}{t}\cdot\tilde{x}_tW_QW_K^\top\tilde{x}_j^\top}}{\sum_{i=1}^{t-1}\exp\paren{\sfrac{1}{t}\cdot\tilde{x}_tW_QW_K^\top\tilde{x}_i^\top}}
    \end{equation}
    Assume that $\norm{x_j}_2 = 1$ for all $j\in[t]$. Then we have
    \[
        \norm{\tilde{a}_{t} - b_{t}}_2 \leq \lambda_V\lambda_O\sum_{j\notin\hat{S}_t}\beta_{t,j}
    \]
\end{lemma}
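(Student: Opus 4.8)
The plan is to read Lemma~\ref{lem:intermediate_bound} as a purely algebraic comparison of two differently normalized averages of the \emph{same} vectors $\tilde x_1,\dots,\tilde x_{t-1}$: both $\tilde a_t$ in (\ref{eq:sampled_a}) and $b_t$ in (\ref{eq:intermediate_a}) are evaluated on the compressed trajectory and differ only through the mask $m_{k,\cdot}$, so no recursion or trajectory-drift estimate enters. Setting $e_j := \exp\paren{\sfrac{1}{t}\cdot\tilde x_tW_QW_K^\top\tilde x_j^\top}$, $Z := \sum_{i=1}^{t-1}e_i$ and $Z' := \sum_{i\in\hat S_t}e_i$, we have $\beta_{t,j}=e_j/Z$ and $\tilde\alpha_{t,j}=m_{k,j}e_j/Z'$. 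First I would subtract the two defining sums and pull out the fixed linear map $W_VW_O$:
\[
    \tilde a_t - b_t = \paren{\sum_{j=1}^{t-1}\paren{\tilde\alpha_{t,j}-\beta_{t,j}}\tilde x_j}W_VW_O .
\]
Because $\lambda_V,\lambda_O$ are the largest singular values of $W_V,W_O$, submultiplicativity of the spectral norm together with $\norm{\tilde x_j}_2=1$ reduces everything to an $\ell_1$ bound on the coefficient discrepancy:
\[
    \norm{\tilde a_t - b_t}_2 \le \lambda_V\lambda_O\sum_{j=1}^{t-1}\left|\tilde\alpha_{t,j}-\beta_{t,j}\right| .
\]

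The single fact that powers the rest is the normalization identity $Z'/Z = 1 - \sum_{i\notin\hat S_t}\beta_{t,i}$, immediate from $Z = Z' + \sum_{i\notin\hat S_t}e_i$. I would then split the $\ell_1$ sum on membership in $\hat S_t$. For $j\notin\hat S_t$ we have $\tilde\alpha_{t,j}=0$, so $\left|\tilde\alpha_{t,j}-\beta_{t,j}\right|=\beta_{t,j}$ and this block contributes exactly $\sum_{j\notin\hat S_t}\beta_{t,j}$. For $j\in\hat S_t$ renormalization only inflates each coefficient: $\tilde\alpha_{t,j}=\beta_{t,j}\,Z/Z'\ge\beta_{t,j}$, hence $\left|\tilde\alpha_{t,j}-\beta_{t,j}\right|=\beta_{t,j}\paren{Z/Z'-1}$; summing over $j\in\hat S_t$ and using $\sum_{j\in\hat S_t}\beta_{t,j}=Z'/Z$ collapses this block to $\paren{Z/Z'-1}\cdot(Z'/Z)=1-Z'/Z=\sum_{j\notin\hat S_t}\beta_{t,j}$ as well. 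Adding the two blocks and substituting back gives $\norm{\tilde a_t - b_t}_2 \le \lambda_V\lambda_O\sum_{j\notin\hat S_t}\beta_{t,j}$ as stated (up to an absolute constant absorbed into the downstream constants of Theorem~\ref{theo:error_bound_main}).

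There is no serious obstacle in this lemma — it is a short normalization argument — and the only thing to watch is to keep the two partition functions $Z$ (unmasked) and $Z'$ (masked) distinct and to notice that the total-variation distance between $\tilde\alpha_{t,\cdot}$ and $\beta_{t,\cdot}$ is precisely the attention mass $\sum_{j\notin\hat S_t}\beta_{t,j}$ removed by the budget. That identification is exactly the hinge of the whole argument: it converts the output perturbation of the compressed attention head into the dropped attention mass, which the later power-law modeling and the two concentration bounds in Theorem~\ref{theo:error_bound_main} then control with high probability.
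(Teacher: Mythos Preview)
Your proposal is correct and follows essentially the same route as the paper: pull out $W_VW_O$, reduce to the $\ell_1$ discrepancy $\sum_j|\tilde\alpha_{t,j}-\beta_{t,j}|$, split on membership in $\hat S_t$, and use the renormalization identity to see that each block contributes $\sum_{j\notin\hat S_t}\beta_{t,j}$. You are also right that the honest bound carries a factor of $2$ (the paper's own computation ends with $2\lambda_V\lambda_O\sum_{j\notin\hat S_t}\beta_{t,j}$ despite the lemma statement), and your remark that this constant is absorbed downstream is the appropriate way to handle it.
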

\begin{proof}
    A direction computation yields
    \[
    \tilde{a}_{t} - b_{t} = \paren{\sum_{j=1}^{t-1}\paren{\tilde{\alpha}_{t,j}-\beta_{t,j}}\tilde{x}_j}W_VW_O
    \]
    Thus, $\norm{\tilde{a}_{t} - b_{t}}_2$ can be bounded as
    \[
        \norm{\tilde{a}_{t} - b_{t}}_2 \leq \lambda_V\lambda_O\sum_{j=1}^{t-1}\paren{\tilde{\alpha}_{t,j}-\beta_{t,j}}\norm{\tilde{x}_j}_2 = \lambda_V\lambda_O\sum_{j=1}^{t-1}\paren{\tilde{\alpha}_{t,j}-\beta_{t,j}}
    \]
    since $\norm{\tilde{x}_j}_2 = 1$ for all $j\in[t]$. Now we analyze $\tilde{\alpha}_{t,j} - \beta_{t,j}$. Let $\hat{S}_t = S_t \setminus\{t\}$. Then $m_{k,j} = 1$ if and only if $j \in \hat{S}_t$. For convenience, let $r_{t,j} = \sfrac{1}{t}\cdot\tilde{x}_tW_QW_K^\top\tilde{x}_j^\top$. Thus, $\beta$ can be written as
    \begin{align*}
        \beta_{t,j} = \frac{\exp\paren{r_{t,j}}}{\sum_{i\in\hat{S}_t}\exp\paren{r_{t,i}} + \sum_{i\notin\hat{S}_t}\exp\paren{r_{t,i}}}
    \end{align*}
    Furthermore, for all $j\notin\hat{S}_t$, we have $\tilde{\alpha}_{t,j} = 0$. For all $j\in \hat{S}_t$, we have
    \[
        \tilde{\alpha}_{t,j} = \frac{\exp\paren{r_{t,j}}}{\sum_{i\in\hat{S}_t}\exp\paren{r_{t,i}}}
    \]
    Therefore, for all $j\in\hat{S}_t$, we have
    \begin{align*}
        \beta_{t,j} - \tilde{\alpha}_{t,j} & = \exp\paren{r_{t,j}}\cdot\frac{\sum_{i\notin\hat{S}_t}\exp\paren{r_{t,i}}}{\paren{\sum_{i\in\hat{S}_t}\exp\paren{r_{t,i}}}\paren{\sum_{i\in\hat{S}_t}\exp\paren{r_{t,i}} + \sum_{i\notin\hat{S}_t}\exp\paren{r_{t,i}}}}\\
        & = \frac{\exp\paren{r_{t,j}}}{\sum_{i\in\hat{S}_t}\exp\paren{r_{t,i}}}\cdot\frac{\sum_{i\notin\hat{S}_t}\exp\paren{r_{t,i}}}{\sum_{i\in\hat{S}_t}\exp\paren{r_{t,i}} + \sum_{i\notin\hat{S}_t}\exp\paren{r_{t,i}}}\\
        & = \tilde{\alpha}_{t,j}\sum_{i\notin\hat{S}_t}\beta_{t,j}
    \end{align*}
    Therefore, the bound of $\norm{\tilde{a}_{t} - b_{t}}_2$ can be written as
    \[
        \norm{\tilde{a}_{t} - b_{t}}_2 \leq \lambda_V\lambda_O\paren{\sum_{j\in\hat{S}_t}^{t-1}\tilde{\alpha}_{t,j}\sum_{i\notin\hat{S}_t}\beta_{t,j} - \sum_{j\notin\hat{S}_t}\beta_{t,j}} = 2\lambda_V\lambda_O\sum_{j\notin\hat{S}_t}\beta_{t,j}
    \]
    where the last equality follows from $\sum_{j\in\hat{S}_t}\tilde{\alpha}_{t,j} = 1$.
\end{proof}
Our last lemma shows the Lipschitzness of the MLP in (\ref{eq:appendix_basic_mlp}).
\begin{lemma}
    \label{lem:mlp_lip}
    Let $\lambda_1,\lambda_2$ denote the largest singular values of $W_1, W_2$ in (\ref{eq:appendix_basic_mlp}). For all $x_1, x_2\in\R^d$, we have
    \[
        \norm{\mathcal{F}(x_1) - \mathcal{F}(x_2)} \leq \paren{1 + \lambda_1\lambda_2}\norm{x_1 - x_2}_2
    \]
\end{lemma}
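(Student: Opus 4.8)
\textbf{Proof plan for Lemma \ref{lem:mlp_lip}.} The plan is to expand the difference $\mathcal{F}(x_1) - \mathcal{F}(x_2)$ using the definition $\mathcal{F}(x) = x + W_2\texttt{relu}(W_1x)$, apply the triangle inequality to split off the skip-connection term from the nonlinear term, and then bound the nonlinear term by successively peeling off $W_2$, the ReLU, and $W_1$. Concretely, I would first write
\[
    \mathcal{F}(x_1) - \mathcal{F}(x_2) = (x_1 - x_2) + W_2\paren{\texttt{relu}(W_1x_1) - \texttt{relu}(W_1x_2)},
\]
so that $\norm{\mathcal{F}(x_1) - \mathcal{F}(x_2)}_2 \leq \norm{x_1 - x_2}_2 + \norm{W_2\paren{\texttt{relu}(W_1x_1) - \texttt{relu}(W_1x_2)}}_2$.

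Next I would bound the second summand. Since $\lambda_2$ is the largest singular value of $W_2$, it is the operator ($\ell_2\to\ell_2$) norm of $W_2$, giving $\norm{W_2 v}_2 \leq \lambda_2\norm{v}_2$ for every $v$. Applying this with $v = \texttt{relu}(W_1x_1) - \texttt{relu}(W_1x_2)$ reduces the task to bounding $\norm{\texttt{relu}(W_1x_1) - \texttt{relu}(W_1x_2)}_2$. Here I would invoke the elementary fact that ReLU is $1$-Lipschitz: coordinatewise, $\left|\max(a,0) - \max(b,0)\right| \leq \left|a - b\right|$ for all scalars $a,b$ (a short case check on the signs of $a,b$), hence $\norm{\texttt{relu}(u) - \texttt{relu}(v)}_2 \leq \norm{u - v}_2$ for all vectors $u,v$. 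With $u = W_1x_1$, $v = W_1x_2$, and using $\norm{W_1 w}_2 \leq \lambda_1\norm{w}_2$ for the operator norm of $W_1$, we obtain $\norm{\texttt{relu}(W_1x_1) - \texttt{relu}(W_1x_2)}_2 \leq \lambda_1\norm{x_1 - x_2}_2$.

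Chaining these bounds together yields
\[
    \norm{\mathcal{F}(x_1) - \mathcal{F}(x_2)}_2 \leq \norm{x_1 - x_2}_2 + \lambda_2\lambda_1\norm{x_1 - x_2}_2 = \paren{1 + \lambda_1\lambda_2}\norm{x_1 - x_2}_2,
\]
which is the claimed inequality. There is no real obstacle here: the only non-arithmetic ingredient is the $1$-Lipschitzness of ReLU, and even that is immediate from the coordinatewise estimate above; everything else is the triangle inequality and the characterization of the largest singular value as the spectral operator norm. I would present the argument in roughly the four displayed lines above, with a one-sentence justification of the ReLU Lipschitz claim.
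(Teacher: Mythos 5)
Your proof is correct and follows essentially the same route as the paper: expand the difference, peel off the skip connection via the triangle inequality, and bound the nonlinear term using the operator norm of $W_2$, the $1$-Lipschitzness of \texttt{relu}, and the operator norm of $W_1$. Nothing to add.
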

\begin{proof}
    Direct computation yields
    \begin{align*}
        \norm{\mathcal{F}(x_1) - \mathcal{F}(x_2)} & = \norm{\paren{x_1 + W_2\texttt{relu}\paren{W_1x_1}} - \paren{x_2 + W_2\texttt{relu}\paren{W_1x_2}}}\\
        & \leq \norm{x_1 - x_2}_2 + \norm{W_2\texttt{relu}\paren{W_1x_1} - W_2\texttt{relu}\paren{W_1x_2}}\\
        & \leq \norm{x_1 - x_2}_2 + \lambda_2\norm{\texttt{relu}\paren{W_1x_1} - \texttt{relu}\paren{W_1x_2}}\\
        & \leq \norm{x_1 - x_2}_2 + \lambda_2\norm{W_1\paren{x_1 - x_2}}_2\\
        & \leq \norm{x_1 - x_2}_2 + \lambda_1\lambda_1\norm{x_1 - x_2}_2\\
        & = (1 + \lambda_1\lambda_2)\norm{x_1 - x_2}_2
    \end{align*}
    where in the third inequality we use the fact that \texttt{relu}$(\cdot)$ is $1$-Lipschitz.
\end{proof}
Now we turn to the proof of our main theorem. Combining all of the results, we have
\begin{align*}
    a_{t} - \tilde{a}_{t} & = \paren{\sum_{j=1}^{t-1}\alpha_{t,j}x_j}W_VW_O - \paren{\sum_{j=1}^{t-1}\tilde{\alpha}_{t,j}\tilde{x}_j}W_VW_O\\
    & = \underbrace{\paren{\sum_{j=1}^{t-1}\alpha_{t,j}x_j}W_VW_O - \paren{\sum_{j=1}^{t-1}\alpha_{t,j}\tilde{x}_j}W_VW_O}_{\mathcal{T}_1}\\
    & \quad\quad\quad + \underbrace{\paren{\sum_{j=1}^{t-1}\alpha_{t,j}\tilde{x}_j}W_VW_O - \paren{\sum_{j=1}^{t-1}\beta_{t,j}\tilde{x}_j}W_VW_O}_{\mathcal{T}_2}\\
    &\quad\quad\quad + \underbrace{\paren{\sum_{j=1}^{t-1}\beta_{t,j}\tilde{x}_j}W_VW_O - \paren{\sum_{j=1}^{t-1}\tilde{\alpha}_{t,j}\tilde{x}_j}W_VW_O}_{\mathcal{T}_3}
\end{align*}
Therefore, by triangle inequality, we have
\begin{equation}
    \label{eq:decomp_main}
    \norm{a_{t} - \tilde{a}_{t}}_2\leq \norm{\mathcal{T}_1}_2 + \norm{\mathcal{T}_2}_2 + \norm{\mathcal{T}_3}_2
\end{equation}
To start, the magnitude of $\mathcal{T}_1$ can be bounded as
\begin{align*}
    \norm{\mathcal{T}_1}_2 & = \norm{\paren{\sum_{j=1}^{t-1}\alpha_{t,j}(x_{t,j}-\tilde{x}_{t,j})}W_VW_O}_2\\
    & \leq \lambda_V\lambda_O\norm{\sum_{j=1}^{t-1}\alpha_{t,j}(x_{t,j}-\tilde{x}_{t,j})}\\
    & \leq \lambda_V\lambda_O\sum_{j=1}^{t-1}\alpha_{t,j}\norm{x_{t,j} - \tilde{x}_{t,j}}_2\\
    & \leq \lambda_V\lambda_O\Delta_t\sum_{j=1}^{t-1}\alpha_{t,j}\\
    & = \lambda_V\lambda_O\Delta_t
\end{align*}
where in the third inequality we use $\norm{x_{t,j} - \tilde{x}_{t,j}}_2 = \Delta_t$ and in the last equality we use $\sum_{j=1}^{t-1}\alpha_{t,j} = 1$. To bound the magnitude of $\mathcal{T}_2$, we apply Lemma \ref{lem:attention_lip}, which shows that $\norm{\alpha_t - \beta_t}\leq 2\frac{\sqrt{t-1}}{t}\lambda_Q\lambda_K\Delta_t$ to get that
\begin{align*}
    \norm{\mathcal{T}_2}_2 & = \norm{\paren{\sum_{j=0}^{t-1}(\alpha_{t,j} - \beta_{t,j})\tilde{x}_j}W_VW_O}_2\\
    & \leq \lambda_V\lambda_O\norm{\paren{\sum_{j=0}^{t-1}(\alpha_{t,j} - \beta_{t,j})\tilde{x}_j}}_2\\
    & \leq \lambda_V\lambda_O\sum_{j=0}^{t-1}\left|\alpha_{t,j} - \beta_{t,j}\right|\norm{\tilde{x}_j}_2\\
    & \leq \lambda_V\lambda_O\norm{\alpha_t - \beta_t}_1\\
    & \leq \sqrt{t-1}\lambda_V\lambda_O\norm{\alpha_t - \beta_t}_2\\
    & \leq 2\paren{1 - \frac{1}{t}}\lambda_Q\lambda_K\lambda_V\lambda_O\Delta_t
\end{align*}
Lastly, to bound the magnitude of $\mathcal{T}_3$, we use Lemma \ref{lem:intermediate_bound} to get that
\[
    \norm{\mathcal{T}_3}_2 \leq 2\lambda_V\lambda_O\sum_{j\notin\hat{S}_t}\beta_{t,j}
\]
Putting things together for (\ref{eq:decomp_main}), we have
\[
    \norm{a_{t} - \tilde{a}_{t}}_2 \leq \lambda_V\lambda_O\paren{2\sum_{j\notin\hat{S}_t}\beta_{t,j} + \paren{2\lambda_Q\lambda_K + 1}\Delta_t}
\]
By Lemma \ref{lem:mlp_lip} we can further show that
\[
    \norm{x_{t+1} - \tilde{x}_{t+1}}_2\leq (1 + \lambda_1\lambda_2)\lambda_V\lambda_O\paren{2\sum_{j\notin\hat{S}_t}\beta_{t,j} + \paren{2\lambda_Q\lambda_K + 1}\Delta_t}
\]
By Theorem \ref{theo:budgeted_cache}, we have that with probability at least $1 - T_{\max}\exp\paren{-\frac{\epsilon^2b^2(T_{\min}-1)}{(k-2)^2(u-b)^2}} - T_{\max}\exp\paren{-\frac{2(T_{\min}-1)(1-\sfrac{B}{T_{\max}})^2}{(1-\epsilon)^2}}$, it holds for all $t\in[T_{\min}, T_{\max}]$ that 
\[
    \E{\sum_{j\notin\hat{S}_t}\beta_{t,j}}\leq \frac{(1-\sfrac{B}{T_{\max}})}{0.98(1-\epsilon)^2}\paren{k -  (k-1)\paren{\frac{1-\epsilon}{\sfrac{B}{T_{\max}}-\epsilon}}^{\frac{1}{k-1}}} := \Delta_{\max}
\]
Given that $\E{\norm{x_t - \tilde{x}_t}}\leq 2\Delta_{\max}$, we have 
\begin{align*}
    \E{\norm{x_{t+1} - \tilde{x}_{t+1}}_2} & \leq (1 + \lambda_1\lambda_2)\lambda_V\lambda_O\paren{2\Delta_{\max} + 2\paren{2\lambda_Q\lambda_K + 1}\Delta_{\max}}\\
    & \leq 4\lambda_V\lambda_O(1+ \lambda_1\lambda_2)(1 + \lambda_Q\lambda_K)\Delta_{\max}
\end{align*}
Thus, as long as $\lambda_V\lambda_O(1+ \lambda_1\lambda_2)(1 + \lambda_Q\lambda_K) \leq \frac{1}{2}$, we can guarantee that
\[
    \E{\norm{x_{t+1} - \tilde{x}_{t+1}}_2} \leq 2\Delta_{\max}
\]
Thus, for all $t\in[T_{\min}, T_{\max}]$, we have that
\[
    \E{\norm{x_{t} - \tilde{x}_{t}}_2} \leq \frac{2.1(1-\sfrac{B}{T_{\max}})}{(1-\epsilon)^2}\paren{k -  (k-1)\paren{\frac{1-\epsilon}{\sfrac{B}{T_{\max}}-\epsilon}}^{\frac{1}{k-1}}}
\]
\subsection{Budgeted Cache}
\begin{theorem}
    \label{theo:budgeted_cache}
    Let $\beta_{t,j}$ be sampled from some power-law distribution $f(x) =  c(x+b)^{-\gamma}$ with support on $[0, u-b)$ for some $k > 2$ and $u\geq 5b$. Let $S_t$ be defined in Theorem \ref{theo:appendix_err_bound}, and define $\hat{S}_t = S_t\setminus\{t\}$. Then with probability at least $1 - T_{\max}\exp\paren{-\frac{\epsilon^2b^2(T_{\min}-1)}{(k-2)^2(u-b)^2}} - T_{\max}\exp\paren{-\frac{2(T_{\min}-1)(1-B)^2}{(1-\epsilon)^2}}$ it holds for all $t\in T$ that 
    \begin{equation}
        \label{eq:theo_plb}
        \E{\sum_{j\notin\hat{S}_t}\beta_{t,j}} \leq \frac{(1-\sfrac{B}{T_{\max}})}{0.98(1-\epsilon)^2}\paren{k -  (k-1)\paren{\frac{1-\epsilon}{\sfrac{B}{T_{\max}}-\epsilon}}^{\frac{1}{k-1}}}
    \end{equation}
\end{theorem}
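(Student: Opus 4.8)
The plan is to recast $\sum_{j\notin\hat{S}_t}\beta_{t,j}$ — the total attention mass carried by the $t-1-B$ evicted tokens — as an order-statistic functional of the i.i.d. sample $\{v_{t,j}\}_{j=1}^{t-1}$, control it with two Hoeffding inequalities, and substitute two closed-form integrals of the truncated power law. Writing $\beta_{t,j}=c_tv_{t,j}$ with $c_t=\big(\sum_{i=1}^{t-1}v_{t,i}\big)^{-1}$, the hypothesis that $S_t$ keeps exactly the $B$ largest scores gives $\sum_{j\notin\hat{S}_t}\beta_{t,j}=c_t\sum_{j\notin\hat{S}_t}v_{t,j}$, a sum over the $t-1-B$ smallest sample values. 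From $f(x)=c(x+b)^{-k}$ on $[0,u-b)$ I would record the normalizer $c$; the mean bound $\E{v_{t,j}}\geq\tfrac{0.98b}{k-2}$, with $0.98$ absorbing the (small, since $u\geq5b$) truncation correction; the tail $\Pr[v_{t,j}>\tau]=\big(\tfrac{b}{\tau+b}\big)^{k-1}$ up to that same correction, so the tail-level-$\rho$ quantile $\tau_\rho$ satisfies $\tau_\rho+b=b\,\rho^{-1/(k-1)}$; and, via $y=x+b$, the partial mean $\E{v_{t,j}\mathbb{I}\{v_{t,j}\leq\tau\}}=\tfrac{b}{k-2}\big(1-(k-1)z^{-(k-2)}+(k-2)z^{-(k-1)}\big)$ with $z=(\tau+b)/b$.

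Next, for a fixed $t$, I would install two high-probability events. With $\rho_\epsilon=\tfrac{B/T_{\max}-\epsilon}{1-\epsilon}$ and $\tau_\star=\tau_{\rho_\epsilon}$, the count of samples exceeding $\tau_\star$ is $\mathrm{Binomial}(t-1,\rho_\epsilon)$; since $t-1\leq T_{\max}$, for more than $B$ of them to exceed $\tau_\star$ the empirical fraction must beat its mean by at least $B/T_{\max}-\rho_\epsilon=\tfrac{\epsilon(1-B/T_{\max})}{1-\epsilon}$, so Hoeffding together with $t-1\geq T_{\min}-1$ bounds this probability by $\exp\!\big(-\tfrac{2(T_{\min}-1)\epsilon^2(1-B/T_{\max})^2}{(1-\epsilon)^2}\big)$, and on the complement the $B$-th largest value — hence every evicted $v_{t,j}$ — is at most $\tau_\star$. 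Separately, $\sum_{i=1}^{t-1}v_{t,i}$ is a sum of $t-1$ variables in $[0,u-b)$ of mean at least $\tfrac{0.98b}{k-2}$, so Hoeffding gives $\sum_{i=1}^{t-1}v_{t,i}\geq(t-1)(1-\epsilon)\tfrac{0.98b}{k-2}$ with failure probability at most $\exp\!\big(-\tfrac{\epsilon^2b^2(T_{\min}-1)}{(k-2)^2(u-b)^2}\big)$. A union bound over $t\in[T_{\min},T_{\max}]$ supplies the two $T_{\max}$ prefactors.

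On the intersection of these events I would bound, for each $t$, $\sum_{j\notin\hat{S}_t}v_{t,j}\leq\sum_iv_{t,i}\mathbb{I}\{v_{t,i}\leq\tau_\star\}$ (all evicted values lie below $\tau_\star$), so that taking expectations $\E{\sum_{j\notin\hat{S}_t}v_{t,j}}\leq(t-1)\E{v\mathbb{I}\{v\leq\tau_\star\}}$; dividing by the lower bound on $\sum_iv_{t,i}$, using $1-B/(t-1)\leq1-B/T_{\max}$, and substituting $z=(\tau_\star+b)/b=\rho_\epsilon^{-1/(k-1)}=\big(\tfrac{1-\epsilon}{B/T_{\max}-\epsilon}\big)^{1/(k-1)}$ into the partial-mean formula, algebraic simplification of the $z$-dependent terms yields the stated bound $\tfrac{1-B/T_{\max}}{0.98(1-\epsilon)^2}\big(k-(k-1)z\big)$. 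The delicate point throughout is the coupling: $\hat{S}_t$, the normalizer $c_t$, and the order statistics are all measurable functions of the same sample, so one cannot simply factor $\E{c_t\sum_{j\notin\hat{S}_t}v_{t,j}}$ into a product of expectations; the remedy is that both Hoeffding events are one-sided, so on their intersection $c_t$ and $\sum_{j\notin\hat{S}_t}v_{t,j}$ are each dominated by sample-independent surrogates, and only the residual randomness is averaged. The other fiddly ingredient is the $u\geq5b$ truncation bookkeeping that converts the exact power-law integrals into the explicit constant $0.98$ and the clean tail $\big(\tfrac{b}{\tau+b}\big)^{k-1}$.
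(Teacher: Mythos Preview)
Your proposal is correct and follows essentially the same route as the paper: write $\beta_{t,j}=v_{t,j}/\sum_i v_{t,i}$, install two Hoeffding events (one lower-bounding the normalizer $\sum_i v_{t,i}$ via the mean $\E{v}\geq\tfrac{0.98b}{k-2}$ that the condition $u\geq 5b$ furnishes, one controlling the count relative to a quantile threshold so that all evicted values fall below it), plug in the closed-form partial mean $\E{v\mathbb{I}\{v\leq\tau\}}$ from the power-law integral, simplify, and union-bound over $t$. Your explicit flagging of the coupling between $c_t$, $\hat{S}_t$, and the order statistics --- and its resolution by replacing each with a sample-independent surrogate on the good event --- is exactly the maneuver the paper performs implicitly.
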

We consider the case of maintaining a budget of $B$ by dropping the smallest $\beta_{t,j}$'s. Assume that $v_j$ has pdf $f(x) = c(x+b)^{-k}$ with support on $[0, u - b)$. To make things precise, we first compute $c$
\begin{align*}
    c = \paren{\int_0^{u-b} (x+b)^{-k}dx}^{-1} = \frac{k-1}{b^{1-k} - u^{1-k}}
\end{align*}
To start, we notice that
\[
    \int x(x+b)^{-k} = -\frac{(x+b)^{1-k}((k-1)x + b)}{(k-1)(k-2)} := g(x)
\]
Let $C = \sum_{j=1}^{t-1}v_j$, then the expectation of $C$ is
\begin{align*}
    \E{C} = (t-1)\E{v_1} & = (t-1)\frac{k-1}{b^{1-k} - u^{1-k}}\int_0^\infty x(x+b)^{-k}dx\\
    & = (t-1)\frac{k-1}{b^{1-k} - u^{1-k}}(g(u) - g(0))\\
    & = (t-1)\frac{k-1}{b^{1-k} - u^{1-k}}\paren{\frac{b^{2-k}}{(k-1)(k-2)} - \frac{u^{1-k}((k-1)u - (k-2)b)}{(k-1)(k-2)}}\\
    & = \frac{t-1}{k-2}\cdot\frac{b^{2-k} - (k-1)u^{2-k} + (k-2)bu^{1-k}}{b^{1-k} - u^{1 - k}}
\end{align*}
Let $\Delta = \frac{b^{2-k} - (k-1)u^{2-k} + (k-2)bu^{1-k}}{b^{1-k} - u^{1 - k}}$. By Hoeffding's inequality, we have that
\[
    \mathbb{P}\paren{C \leq (1-\epsilon)\E{C}} \leq \exp\paren{-\frac{2\epsilon^2\E{C}^2}{(t-1)(u-b)^2}}
\]
This implies that with probability at least $1 - \exp\paren{-\frac{2\epsilon^2\Delta^2(t-1)}{(k-2)^2(u-b)^2}}$ we have
\[
    C \geq (1-\epsilon)\Delta\frac{t-1}{k-2}
\]
Now, we proceed to bound $\sum_{j\notin\hat{S}_t}\beta_{t,j}$ where $\hat{S}_t = \{j\in[t-1]:\beta_{t,j} \geq \frac{\gamma}{C}\}$. Equivalently, we can bound $C^{-1}\sum_{j=1}^{t-1}\mathbb{I}\left\{v_j \leq \gamma\right\}v_j$. Its expectation is given by
\begin{align*}
    \E{C^{-1}\sum_{j=1}^{t-1}\mathbb{I}\left\{v_j\leq \gamma\right\}v_j} & \leq \frac{k-2}{(t-1)\Delta(1-\epsilon)}\E{\sum_{j=1}^{t-1}\mathbb{I}\left\{v_j \leq \gamma\right\}v_j}\\
    & = \frac{k-2}{\Delta(1-\epsilon)}\cdot\frac{k-1}{b^{1-k} - u^{1-k}}\int_0^{\gamma}x(x+b)^{-k}dx\\
    & = \frac{(k-1)(k-2)}{\Delta(1-\epsilon)\paren{b^{1-k} - u^{1-k}}}\paren{g(\gamma) - g(0)}
\end{align*}
We pause here and study how small can we choose $\gamma$. Notice that
\begin{align*}
    \E{\sum_{j=1}^{t-1}\mathbb{I}\left\{v_j\leq \gamma\right\}} = (t-1)\mathbb{P}\paren{v_j\leq \gamma} = (t-1)\cdot\frac{b^{1-k}-(\gamma+b)^{1-k}}{b^{1-k} - u^{1-k}}
\end{align*}
By Hoeffding's inequality again, we have that
\begin{align*}
    & \mathbb{P}\paren{\sum_{j=1}^{t-1}\mathbb{I}\left\{v_j\leq \gamma\right\}\geq (1-\epsilon)(t-1)\cdot\frac{b^{1-k}-(\gamma+b)^{1-k}}{b^{1-k} - u^{1-k}}}\\
    &\quad\quad\leq \exp\paren{-\frac{2(t-1)\epsilon^2\paren{b^{1-k}-(\gamma+b)^{1-k}}^2}{\paren{b^{1-k} - u^{1-k}}^2}}
\end{align*}
Enforcing $\sum_{j=1}^{t-1}\mathbb{I}\left\{v_j\leq \gamma\right\} \geq T_{\max} - B$ gives $(\gamma+b)^{1-k} \leq b^{1-k} - \frac{1-\sfrac{B}{T_{\max}}}{1-\epsilon}(b^{1-k} - u^{1-k})$, which can be satisfied as long as $\gamma \geq \paren{\paren{\frac{\sfrac{B}{T_{\max}}-\epsilon}{1-\epsilon}}^{\frac{1}{1-k}}-1}b$. Therefore
\[
    g(\gamma) = - \paren{b^{1-k} - \frac{1-\sfrac{B}{T_{\max}}}{1-\epsilon}(b^{1-k} - u^{1-k})}\frac{b + (k-1)\gamma}{(k-1)(k-2)}
\]
We further notice that
\[
    b^{1-k} - \frac{1-\sfrac{B}{T_{\max}}}{1-\epsilon}(b^{1-k} - u^{1-k}) \geq \frac{\sfrac{B}{T_{\max}} -\epsilon}{1-\epsilon}(b^{1-k} - u^{1-k})
\]
This gives
\begin{align*}
    \E{C^{-1}\sum_{j=1}^{t-1}\mathbb{I}\left\{v_j\leq \gamma\right\}v_j} & \leq \frac{b(1-\sfrac{B}{T_{\max}})}{\Delta(1-\epsilon)^2} - \frac{(k-1)(\sfrac{B}{T_{\max}}-\epsilon)\gamma}{\Delta(1-\epsilon)^2}\\
    & \leq \frac{b(1-\sfrac{B}{T_{\max}})}{\Delta(1-\epsilon)^2}\paren{k -  (k-1)\paren{\frac{1-\epsilon}{\sfrac{B}{T_{\max}}-\epsilon}}^{\frac{1}{k-1}}}
\end{align*}
Notice that if $u\geq 5b$, we have 
\[
    \Delta = b - (k-1)\paren{\frac{u}{b}}^{1-k}\cdot\frac{b-u}{b^{1-k} - u^{1-k}} \leq 0.98b
\]
Therefore
\[
   \E{C^{-1}\sum_{j=1}^{t-1}\mathbb{I}\left\{v_j\leq \gamma\right\}v_j} \leq \frac{(1-\sfrac{B}{T_{\max}})}{0.98(1-\epsilon)^2}\paren{k -  (k-1)\paren{\frac{1-\epsilon}{\sfrac{B}{T_{\max}}-\epsilon}}^{\frac{1}{k-1}}}
\]
holds with probability at least $1 - \exp\paren{-\frac{\epsilon^2b^2(t-1)}{(k-2)^2(u-b)^2}} - \exp\paren{-\frac{2(t-1)(1-\sfrac{B}{T_{\max}})^2}{(1-\epsilon)^2}}$. Taking a union bound gives the desired result.
\newpage

\end{document}